\documentclass{article}
\usepackage[bookmarksnumbered = true, colorlinks = true, linktocpage = true, bookmarksopen = true,linkcolor = black, urlcolor = black]{hyperref}
\usepackage[intlimits]{amsmath}
\usepackage{subcaption}
\usepackage{amssymb}
\usepackage{comment}
\usepackage{amsthm}
\usepackage{rotating}
\usepackage{xcolor}
\usepackage[numbers,sort]{natbib}  
\usepackage{mathtools}
\usepackage[]{algorithm}
\usepackage{algorithmic}
\usepackage{tikz}
\usepackage{authblk}
\usepackage{booktabs}
\usetikzlibrary{fit,shapes.misc}
\usepackage[utf8]{inputenc}
\usepackage[margin =1in]{geometry}
\usepackage{url}
\usepackage[numbers,sort]{natbib}
\usepackage[normalem]{ulem}

\usepackage[titletoc]{appendix}
\usepackage[bottom,flushmargin,hang,multiple]{footmisc}
\usepackage{setspace}

\usepackage{amsmath,amsbsy,amsgen,amscd,amsthm,amsfonts,amssymb,mathrsfs}
\newtheorem{theorem}{Theorem}

\newtheorem{proposition}[theorem]{Proposition}
\newtheorem{corollary}{Corollary}
\newtheorem{assumption}{Assumption}

\newtheorem{remark}{Remark}
\newtheorem{lemma}{Lemma}

\usepackage{xcolor}
\usepackage{url}
\usepackage{algorithm}
\usepackage{algorithmic}
\usepackage{graphicx}
\usepackage{booktabs}
\usepackage{comment}
\usepackage{amsmath,amsbsy,amsgen,amscd,amsthm,amsfonts,amssymb,mathrsfs}

\title{Exploration strategies for Gaussian process bandit optimization\\ with inference regret guarantees}
\author[1]{Hwanwoo Kim}
\author[2]{Jungwoo Hong}
\author[2]{Kyungjae Lee}
\date{\today}
\affil[1]{Department of Mathematical Sciences, University of Texas at Dallas}
\affil[2]{Department of Statistics,
Korea University}

\begin{document}

\maketitle

\begin{abstract}
Existing cumulative regret guarantees for Bayesian optimization (BO) do not necessarily characterize the quality of the algorithm's final output. This gap is critical under noisy evaluations, where the goal is to return a single high-quality recommendation. To address it, we introduce efficient exploration strategies that can be incorporated into any BO procedure by occasionally replacing the acquisition query with an exploration query.  To validate this approach, we bound the Gaussian process posterior standard deviation over the search space via the fill distance of the queries, which measures the largest region left unexplored. We then use this result to derive upper bounds on the inference regret, i.e., the gap between the global optimum and the function value at the algorithm's returned point. Empirically, these exploration strategies improve the final recommendations of existing BO algorithms on synthetic benchmarks, hyperparameter optimization, and a control task.
\end{abstract}

\section{Introduction}\label{SEC:intro}
Bayesian optimization (BO) has emerged as a principled and sample efficient framework for optimizing expensive black-box functions, with broad applications spanning hyperparameter tuning \citep{wu2020practical,turner2021bayesian,kandasamy2020tuning}, searching for neural network architectures \citep{kandasamy2018neural,zhou2019bayesnas,white2021bananas}, drug discovery \citep{bellamy2022batched,colliandre2023bayesian}, neutrino detector design \citep{kim2025efficient} and designing materials \citep{frazier2016bayesian,zhang2020bayesian,lei2021bayesian}. The predominant paradigm couples a Gaussian process (GP) surrogate \citep{williams2006gaussian, stein2012interpolation} with an acquisition function that trades off exploration and exploitation to guide the search. While this framework enjoys strong theoretical guarantees in the noiseless setting, its behavior under noisy evaluations is considerably more subtle: the acquisition function may systematically under-explore, causing the posterior to remain poorly calibrated over large regions of the search space and leading to unreliable final recommendations.

A natural measure of the quality of a BO algorithm's final output is the \emph{inference regret}, defined as $\max_{x} f(x) - f(\hat{x}_T)$, where $\hat{x}_T$ is the algorithm output after $T$ evaluations. Unlike cumulative regret, which penalizes every suboptimal query, inference regret focuses solely on the quality of the final recommendation, which is the operationally relevant quantity in most practical deployments where a single solution is returned to the user. Ensuring vanishing inference regret requires the posterior to contract uniformly over the search space, which in turn demands sufficient exploration throughout the optimization horizon.

Several exploration strategies have been proposed to enforce such coverage. \emph{Maximum variance reduction} (MVR) selects at each round the point of highest posterior uncertainty, provably driving the posterior variance to zero and attaining order-optimal inference regret up to logarithmic factors \citep{vakili2021optimal}. These guarantees come at a price: every round requires a global optimization of the posterior variance, so MVR is as computationally expensive as a standard acquisition step. At the opposite end, \emph{$\epsilon$-greedy} exploration replaces the acquisition query with a uniformly random point with fixed probability. It is simple, computationally free, and empirically highly effective \citep{de2021greed, do2024epsilon}, yet its theoretical understanding remains limited. Existing analyses are confined to the noiseless setting or to a particular acquisition function \citep{bull2011convergence}, and no inference regret guarantees are available when it is paired with a general acquisition function under noisy evaluations. Beyond these two, the exploration rules developed in approximation theory and experimental design, including farthest-point selection and other quasi-uniform constructions that spread points over a domain as evenly as possible \citep{wendland2004scattered, pronzato2023quasi}, have received little attention as exploration mechanisms within BO.

\paragraph{This work.} We study a generic scheme in which, at each $t^{\text{th}}$ round, the BO algorithm follows its acquisition function with probability $1-p_t$ and otherwise spends the evaluation on a query chosen by a separate exploration rule. The scheme is characterized by an exploration schedule $\{p_t\}_{t\in\mathbb{N}}$ and an exploration rule, while imposing no restrictions on the underlying acquisition function. Existing strategies, including $\epsilon$-greedy, the alternating scheme of \citet{kim2025enhancing}, and pure exploration methods such as MVR, arise as special cases of this scheme. To analyze this scheme, we quantify the coverage of the queries by the largest distance from any point of the domain to its nearest query, a quantity known as the \emph{fill distance}. We bound the GP posterior standard deviation uniformly over the search space in terms of the fill distance, which holds for any query set. 

We then consider two exploration rules, greedy packing and random sampling. With high probability, both drive the fill distance to zero at the optimal rate (up to logarithmic factors for random sampling), even when interleaved with queries from an arbitrary acquisition function. We translate these results into inference regret guarantees for the posterior mean maximizer recommendation under noisy observations, with rates expressed explicitly in terms of the exploration budget $\sum_{t=1}^T p_t$, where $T$ is the total number of iterations. In particular, under a constant exploration schedule, random sampling recovers $\epsilon$-greedy and enjoys inference regret guarantees with essentially no additional computational overhead, providing a lightweight safeguard for practical BO systems.

\paragraph{Contributions.} Our main contributions include:
\begin{itemize}
    \item \textbf{Comprehensive exploration scheme.} We formalize a family of exploration strategies for BO, which can be seamlessly incorporated into existing acquisition functions without additional computational overhead.
    
    \item \textbf{Posterior variance bounds.} 
    We establish an upper bound on the Gaussian process posterior variance in terms of fill distance, which quantifies how well the query locations cover the search space. Unlike existing bounds based on maximum information gain, our fill-distance characterization allows a broad class of space-filling designs to be incorporated as exploration strategies in Bayesian optimization and provides a framework for analyzing their performance.

    \item \textbf{Inference regret bounds.} Motivated by the fact that cumulative regret does not, in general, characterize the quality of the final recommendation under noisy evaluations, we establish inference regret bounds for the posterior mean maximizer under a range of exploration strategies. Our results identify conditions under which the final recommendation converges to the optimum. For uniform random exploration, they also extend the noiseless $\epsilon$-greedy expected improvement convergence guarantee of \citet{bull2011convergence} to noisy evaluations and a broader class of acquisition functions.

    \item \textbf{Empirical validation.} We demonstrate the effectiveness of the proposed exploration strategies on benchmark functions, hyperparameter tuning, and control tasks, showing in particular that uniform random exploration provides a computationally efficient way to complement standard BO and consistently improves inference regret. 
\end{itemize}

\section{Background}
\label{sec:background}
\subsection{Problem Statement}
We seek the global maximizer $x^\star$ of an objective function $f:\mathcal{X}
:=
\prod_{j=1}^d[a_j,b_j]
\subset\mathbb R^d, \to \mathbb{R}$ whose functional form is unknown. Information about $f$ is available only through noisy evaluations, which may be expensive to obtain; the goal is therefore to locate the optimum with as few queries as possible. We denote the query locations up to time $t \in \mathbb{N}$ by $X_t = \{x_1, \ldots, x_t\}$ and the corresponding noisy observations by $Y_t = [f(x_1) + \epsilon_1, \ldots, f(x_t)+\epsilon_t]^\top$, where the noise terms $\{\epsilon_i\}_
{i \in \mathbb{N}}$ are i.i.d. $R$-sub-Gaussian random variables, i.e., $\mathbb{E}\left[e^{\gamma \epsilon_i}\right] \le \exp\left(\frac{\gamma^2 R^2}{2}\right)$ for all $\gamma \in \mathbb{R}$.

\subsection{Gaussian Processes}
A Gaussian process (GP) provides a flexible surrogate model for an unknown objective
function $f:\mathcal{X}\to\mathbb{R}$ and is characterized by a mean function
$\mu:\mathcal{X}\to\mathbb{R}$ and a symmetric positive-definite kernel
$k:\mathcal{X}\times\mathcal{X}\to\mathbb{R}_{\ge 0}$. We do not assume that the objective function is itself generated from a GP; throughout, the GP is used as a surrogate for optimization. Given observations
$\mathcal{D}_t=\{(x_i,y_i)\}_{i=1}^t$, let
$
{k}_t(x)
=
[k(x_1,x),\ldots,k(x_t,x)]^\top,
~
{K}_{tt}
=
[k(x_i,x_j)]_{i,j=1}^t.
$
We define the surrogate mean and variance by
\begin{align}
\mu_t(x)
&=
k_t(x)^\top
(K_{tt}+\lambda I)^{-1}Y_t,
\label{pos_mean}\\
\sigma_t^2(x)
&=
k(x,x)
-
k_t(x)^\top
(K_{tt}+\lambda I)^{-1}
k_t(x),
\label{pos_var}
\end{align}
where $\lambda>0$ is a regularization parameter, often referred to as the
nugget. These expressions coincide with the usual GP posterior mean and
variance under Gaussian observation noise, but remain well defined under the
sub-Gaussian noise assumption considered here. We therefore retain the
standard terminology of posterior mean and posterior variance when referring
to $\mu_t$ and $\sigma_t^2$.
The posterior mean $\mu_t$ provides the estimate of $f$ given $\mathcal{D}_t$, while the posterior variance $\sigma_t^2(x)$ quantifies epistemic uncertainty: it is large in unexplored regions and contracts toward zero as observations accumulate.

The choice of kernel encodes prior beliefs about the smoothness of $f$. The squared-exponential (SE) kernel, 
$$ 
k_{\mathrm{SE}}(x, x') = \exp\left(-\frac{\|x - x'\|^2}{2\ell^2}\right), 
$$ 
with lengthscale $\ell > 0$, produces sample paths that are almost surely smooth to all orders \citep{williams2006gaussian, kanagawa2018gaussian}, suitable for modeling functions with higher-order differentiability. The Matérn kernel \citep{stein2012interpolation}, 
{\small
\[
k_{\nu}(x, x') = \frac{2^{1-\nu}}{\Gamma(\nu)} \left(\frac{\sqrt{2\nu}\,\|x-x'\|}{\ell}\right)^{\nu} K_{\nu}\!\left(\frac{\sqrt{2\nu}\,\|x-x'\|}{\ell}\right),
\]
}where $K_\nu$ is the modified Bessel function of the second kind, generalizes the SE kernel to a broader class of functions whose smoothness is controlled by the parameter $\nu > 0$.

\subsection{Bayesian Optimization}
Bayesian optimization (BO) aims to locate the global maximizer $x^\star$ of $f$ using as few noisy evaluations as possible, by maintaining a GP surrogate of $f$ and using it to decide where to evaluate next. At each round $t \in \mathbb{N}$, BO selects the next query by maximizing an acquisition function $\alpha_t: \mathcal{X} \to \mathbb{R}$, i.e., $x_t = \arg\max_{x \in \mathcal{X}} \alpha_t(x)$, then observes 
$y_t = f(x_t) + \epsilon_t$ and updates the surrogate model. The procedure is summarized in Algorithm~\ref{alg:gpbo}.

\begin{algorithm}[htp]
\caption{Bayesian optimization}
\label{alg:gpbo}
\begin{algorithmic}[1]
\REQUIRE Kernel $k$, initial data $\mathcal{D}_0$
\FOR{$t = 1, 2, \dots$}
  \STATE Compute \eqref{pos_mean} and \eqref{pos_var} from $\mathcal{D}_{t-1}$
  \STATE Select $x_t = \arg\max_{x \in \mathcal{X}} \alpha_t(x)$
  \STATE Observe $y_t = f(x_t) + \epsilon_t$ 
  \STATE Update $\mathcal{D}_t = \mathcal{D}_{t-1} \cup \{(x_t, y_t)\}$
\ENDFOR
\end{algorithmic}
\end{algorithm}
Designing $\alpha_t$ involves two competing goals: \emph{exploitation}, which favors points the surrogate currently predicts to be good, and 
\emph{exploration}, which favors points where the surrogate remains uncertain. An effective acquisition function trades off the two, refining regions that already look promising while continuing to probe regions whose potential is still unknown. Acquisition functions differ in how they strike this balance; widely used examples include expected improvement 
\citep{mockus1978application, jones1998efficient, ament2023unexpected}, upper confidence bound \citep{srinivas2010gaussian}, Thompson sampling \citep{russo2014learning, chowdhury2017kernelized}, and entropy-based methods 
\citep{hennig2012entropy, wang2017max}.

\subsection{Evaluation Metric}
\label{subsec:Eval_Metric}
A widely used performance measure for BO is cumulative regret given by, 
$$ 
R_T := \sum_{t=1}^T r_t, \quad r_t := f(x^*) - f(x_t) \geq 0, 
$$ 
where the term $r_t$ is known as the instantaneous regret. A no-regret algorithm satisfies $R_T/T \to 0$ as $T \to \infty$, meaning the average quality of queries converges to the optimum. The related simple regret $S_T := \min_{t = 1,\ldots, T} r_t \leq R_T/T$ measures the quality of the single best point found. Establishing sublinear growth of cumulative regret implies convergence of simple regret to zero, and has served as the primary theoretical validation for BO strategies across a wide range of assumptions on the objective \citep{srinivas2010gaussian, russo2014learning, chowdhury2017kernelized, whitehouse2023sublinear, kim2024optimization, kim2025inexact, iwazaki2025improved, takeno2026regret}.

\subsection{Inference Regret: A Gap in Existing Theory}
Despite its usefulness, the cumulative regret perspective has two important limitations for assessing the final outcome of BO. First, the implication $S_T \leq R_T/T$ can be coarse: even a bounded cumulative regret guarantee yields at most an $\mathcal{O}(T^{-1})$ simple-regret rate and therefore cannot capture potentially faster convergence of the best query. Second, in the noisy evaluation setting, the best queried point is not discernable. Sublinear cumulative regret ensures that some queries become nearly optimal, but does not indicate which queried point should ultimately be returned.

This distinction is not problematic under noiseless evaluations, where the best queried point can be identified directly from the observed function values. Under noisy evaluations, however, the algorithm must infer its final recommendation from noisy data. We therefore consider the inference regret \citep{wang2017max},
$
I_T := f(x^\star)-f(\hat{x}_T),
$
where $\hat{x}_T$ denotes the algorithm's final recommendation. A common choice in noisy BO is the maximizer of the final posterior mean,
$
\hat{x}_T = \arg\max_{x\in\mathcal X}\mu_T(x).
$
While simple regret describes the quality of the best point queried in hindsight, inference regret measures the quality of the point that the algorithm actually recommends. The two therefore need not behave similarly in the presence of noise. Although these notions are sometimes conflated in the literature \citep{garnett2023bayesian}, we maintain this distinction throughout.

Figure~\ref{fig:motivation} illustrates the limitation of cumulative regret. The query strategy exhibits favorable sublinear cumulative regret behavior and increasingly concentrates its evaluations near the true maximizer $x^\star$. Nevertheless, observation noise can lead to a poor posterior mean approximation in sparsely explored regions, and insufficient subsequent exploration may prevent these errors from being corrected, allowing a spurious posterior mean maximum to persist. As a result, the posterior mean recommendation remains separated from $x^\star$ and incurs non-vanishing inference regret even though the query sequence itself is well behaved. Thus, favorable cumulative regret performance need not translate into a final recommendation that converges to the true optimum.

This issue is especially important in noisy BO, where the practitioner typically returns a single recommendation. Yet inference regret has received comparatively limited theoretical attention \citep{vakili2021optimal,scarlett2017lower}, and standard cumulative regret guarantees do not, in general, ensure convergence of a recommendation such as $\arg\max_{x \in \mathcal{X}} \mu_T(x)$. The example in Figure~\ref{fig:motivation} highlights what is missing: good query performance alone is insufficient, and uncertainty must also be reduced across the search space. Otherwise, posterior mean approximation errors in sparsely explored regions may persist and distort the final recommendation.
\begin{figure*}[htp]
\centering
\includegraphics[width=1\textwidth]{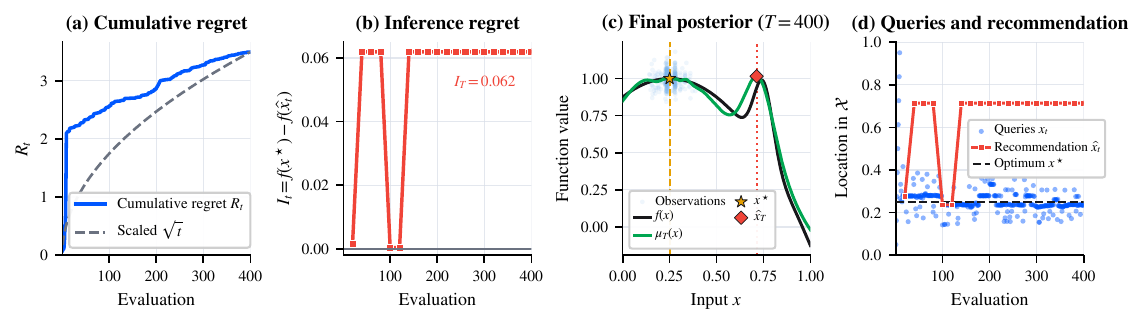}
\caption{Cumulative regret and inference regret under noisy Bayesian optimization. The query strategy achieves favorable cumulative regret behavior and concentrates evaluations near the true maximizer, yet the posterior mean recommendation remains suboptimal. From the user's perspective, however, the noisy evaluations do not reveal which queried points are actually close to the true maximizer. This illustrates that a sublinear cumulative regret guarantee does not necessarily imply vanishing inference regret.}
\label{fig:motivation}
\end{figure*}

\section{Exploration Schemes for Bayesian Optimization}
\label{sec:PROB_EXPLORE}
A reliable final recommendation requires the surrogate to be accurate throughout the search space, not only in regions frequently queried by the algorithm. Acquisition functions alone may not provide such a guarantee: once a region appears unpromising, it may receive little further evaluation. Motivated by this limitation, we introduce a family of exploration schemes that serve as a generic wrapper around an arbitrary acquisition function. The additional exploration step promotes global coverage of the search space while leaving the underlying acquisition rule unchanged. We show that this coverage yields uniform control of posterior uncertainty and, in turn, inference regret guarantees.

\subsection{A Generic Exploration Scheme}\label{subsec:framework}
We propose a BO strategy that, at each round, either follows the acquisition function as usual or spends the evaluation on a query chosen by a separate exploration strategy. Formally, given an exploration schedule $\{p_t\}_{t\in\mathbb{N}}\subset[0,1]$, let $\{E_t\}_{t\in\mathbb{N}}$ be independent Bernoulli random variables with $E_t\sim\mathrm{Bernoulli}(p_t)$. The query at round $t\in\mathbb{N}$ is
\begin{equation}\label{EQN:PROB_EXP}
x_t =
\begin{cases}
x_t^{\mathrm{exp}}, & E_t=1,\\
\displaystyle \arg\max_{x\in\mathcal{X}} \alpha_t(x), & E_t=0,
\end{cases}
\end{equation}
where $\alpha_t$ is an acquisition function of interest and $x_t^{\mathrm{exp}}$ is specified by an exploration rule designed to reduce surrogate uncertainty. The scheme has two ingredients, the exploration schedule and the exploration rule, and places no restriction on the acquisition function.

Several existing strategies are special cases of \eqref{EQN:PROB_EXP}. Setting $p_t = 0$ for all $t \in \mathbb{N}$ recovers vanilla BO with an acquisition rule $\alpha_t$, while setting $p_t = 1$ for all $t \in \mathbb{N}$ with $x_t^{\mathrm{exp}} = \arg\max_{x \in \mathcal{X}} \sigma_{t-1}^2(x)$ recovers the MVR strategy of \citet{vakili2021optimal}. A constant schedule $p_t = \epsilon$ for all $t \in \mathbb{N}$ with uniformly random queries gives the $\epsilon$-greedy strategy \citep{bull2011convergence, de2021greed}, and alternating between exploration and acquisition rounds, i.e., $p_t = 1$ for even $t$ and $p_t = 0$ for odd $t$, recovers the exploration scheme of \citet{kim2025enhancing}.

\subsection{Posterior Variance Bound}\label{subsec:variance-decay}

The proposed BO strategy \eqref{EQN:PROB_EXP} imposes no restriction on the underlying acquisition function. We therefore seek a posterior variance bound that depends only on the resulting query locations. This is natural because the posterior variance is independent of the observed function values and decreases as the query locations become denser throughout the search space. We measure how densely a set $X_n = \{x_1, \dots, x_n\} \subset \mathcal{X}$ covers the domain by its \emph{fill distance}
\begin{equation}\label{EQN:FILL}
h_{X_n} := \sup_{x \in \mathcal{X}} \min_{1 \le i \le n} \|x - x_i\|_2,
\end{equation}
the largest distance from a point of $\mathcal{X}$ to its nearest query point. Relating the posterior variance to $h_{X_n}$ requires a smoothness condition on the kernel.

\begin{assumption}\label{ASSUMP:KERNEL}
For some $\tau > d/2$, the reproducing kernel Hilbert space $\mathcal{H}_k(\mathcal{X})$ associated with the kernel $k$ is continuously embedded in the Sobolev space $W_2^\tau(\mathcal{X}^\circ)$, where $\mathcal{X}^\circ$ denotes the interior of $\mathcal{X}$. That is, $\|g|_{\mathcal{X}^\circ}\|_{W_2^\tau(\mathcal{X}^\circ)} \lesssim \|g\|_{\mathcal{H}_k(\mathcal{X})}$ for all $g \in \mathcal{H}_k(\mathcal{X})$, where $g|_{\mathcal{X}^\circ}$ denotes the restriction of $g$ to $\mathcal{X}^\circ$.
\end{assumption}

Assumption~\ref{ASSUMP:KERNEL} holds for the Mat\'ern kernel with $\tau=\nu+d/2$ \citep{wendland2004scattered}, and for the SE kernel for every integer $\tau>d/2$ \citep{steinwart2008svm}. By combining a sampling inequality \citep{arcangeli2012extension} with a Gagliardo--Nirenberg inequality \citep{brezis2019sobolev}, we obtain the following posterior variance bound in terms of the coverage of the query locations over the search space.

\begin{theorem}[Posterior variance and fill distance]\label{THM:PV_FILL}
Under Assumption~\ref{ASSUMP:KERNEL}, there exists a constant $h_0 > 0$ such that every query set $X_n \subset \mathcal{X}$ with $h_{X_n} \le h_0$ satisfies
\[
\sup_{x \in \mathcal{X}} \sigma_n^2(x) = \mathcal{O}\left(h_{X_n}^{2\tau - d} + \lambda^{1 - \frac{d}{2\tau}}\, h_{X_n}^{d\left(1 - \frac{d}{2\tau}\right)}\right),
\]
where the constant in $\mathcal{O}(\cdot)$ is independent of $n$.
\end{theorem}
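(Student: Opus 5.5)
Fix $x\in\mathcal{X}$ and use the variational characterization of the regularized posterior variance:
\[
\sigma_n^2(x)=\sup_{\|g\|_{\mathcal{H}_k}\le 1}\Bigl(g(x)-\sum_{i=1}^n c_i(x)g(x_i)\Bigr)^2+\lambda\|c(x)\|_2^2,
\qquad c(x)=(K_{nn}+\lambda I)^{-1}k_n(x).
\]
Equivalently, $\sigma_n^2(x)=\min_{c\in\mathbb{R}^n}\bigl\{\|k(\cdot,x)-\sum_i c_ik(\cdot,x_i)\|_{\mathcal{H}_k}^2+\lambda\|c\|_2^2\bigr\}$. A cleaner route is the representation $\sigma_n^2(x)=\sup_{\|g\|_{\mathcal{H}_k}\le1}|g(x)-\tilde g(x)|^2$ (up to a constant), where $\tilde g$ is the kernel ridge estimate built from noiseless data $g(X_n)$ with penalty $\lambda$; one adds the term $\lambda\|\alpha\|^2$ to handle the variance part. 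I would work with the first-order form $\sigma_n^2(x)=k(x,x)-k_n(x)^\top(K+\lambda I)^{-1}k_n(x)$ and use the standard identity that $\sigma_n(x)=\sup\{|g(x)| : \|g\|_{\mathcal{H}_k}^2+\lambda^{-1}\sum_i g(x_i)^2\le 1\}$. This identity is the dual of the ridge problem, and it is the key reduction.

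So fix $g$ with $\|g\|_{\mathcal{H}_k}\le1$ and $\sum_i g(x_i)^2\le\lambda$; we must bound $\|g\|_{L^\infty(\mathcal{X})}$. By Assumption~\ref{ASSUMP:KERNEL}, $\|g\|_{W_2^\tau}\lesssim1$. Apply the sampling inequality of \citet{arcangeli2012extension} with $p=2$, $q=\infty$, $\tau>d/2$. For $h_{X_n}\le h_0$ it gives
\[
\|g\|_{L^\infty}\lesssim h_{X_n}^{\tau-d/2}|g|_{W_2^\tau}+\|g|_{X_n}\|_{\ell^\infty}.
\]
This is too crude, since $\|g|_{X_n}\|_\infty\le\sqrt\lambda$ gives no $h$-decay, and the target rate has the exponent $1-d/(2\tau)$, which signals interpolation. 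So instead I use the $q=2$ version:
\[
\|g\|_{L^2}\lesssim h^{\tau}|g|_{W_2^\tau}+h^{d/2}\|g|_{X_n}\|_{\ell^2}\lesssim h^\tau+h^{d/2}\sqrt\lambda.
\]
Here the $h^{d/2}$ weight comes from each sample representing a cell of volume $\sim h^d$.

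Next apply the Gagliardo--Nirenberg inequality on the Lipschitz domain $\mathcal{X}^\circ$ \citep{brezis2019sobolev}:
\[
\|g\|_{L^\infty}\lesssim\|g\|_{W_2^\tau}^{\theta}\|g\|_{L^2}^{1-\theta},\qquad \theta=\frac{d}{2\tau},
\]
which is valid since $\tau>d/2$. With $\|g\|_{W_2^\tau}\lesssim1$ this yields
\[
|g(x)|\lesssim(h^\tau+h^{d/2}\sqrt\lambda)^{1-d/(2\tau)}.
\]
Squaring and using $(a+b)^s\le a^s+b^s$ for $s\in(0,1]$ gives
\[
\sigma_n^2(x)\lesssim h^{2\tau(1-d/(2\tau))}+(h^d\lambda)^{1-d/(2\tau)}=h^{2\tau-d}+\lambda^{1-\frac d{2\tau}}h^{d(1-\frac d{2\tau})},
\]
which is exactly the claim. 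All constants come from the embedding, the sampling inequality and Gagliardo--Nirenberg, and depend only on $\mathcal{X},\tau,d,k$, not on $n$. The threshold $h_0$ is the one from the sampling inequality.

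The main obstacle is the precise form of the $\ell^2$-weighted sampling inequality. It requires the discrete term to carry the factor $h^{d/2}$ uniformly over arbitrary, possibly clustered point sets, whereas the usual statements involve separation or quasi-uniformity. If the \citet{arcangeli2012extension} version only provides an $\ell^\infty$ discrete term, I would first try to pass to a subset $Y\subset X_n$ that is quasi-uniform with fill distance $\le 2h$, obtained by greedy thinning. Then $\sum_{y\in Y}g(y)^2\le\sum_i g(x_i)^2\le\lambda$, and $\#Y\lesssim h^{-d}$, which recovers the $h^{d/2}\|\cdot\|_{\ell^2}$ form.
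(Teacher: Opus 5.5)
Your argument is correct and is essentially the paper's proof in dual form. Your supremum $\sigma_n(x)=\sup\{|g(x)|:\|g\|_{\mathcal H_k}^2+\lambda^{-1}\sum_i g(x_i)^2\le 1\}$ is attained at $g=r_x/\sigma_n(x)$, where $r_x=k(x,\cdot)-\sum_i(\widehat{\boldsymbol c}_x)_i\,k(x_i,\cdot)$ is the residual the paper works with; the paper shows $\|r_x\|_{\mathcal H_k}\le\sigma_n(x)$, $\|\boldsymbol r_{x,n}\|_2\le\sqrt{\lambda}\,\sigma_n(x)$ and $r_x(x)=\sigma_n^2(x)$, and then uses the same two steps as you: the $L_2$ sampling inequality with discrete term $h_{X_n}^{d/2}\|\cdot\|_2$, followed by Gagliardo--Nirenberg.

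Two small points. First, the paper takes that sampling inequality directly from \citet{arcangeli2012extension} for arbitrary point sets, so your thinning fallback is not needed; as written it would not work anyway, because a cardinality bound on $Y$ alone does not turn an $\ell^\infty$ data term into $h^{d/2}\ell^2$ without a localized argument. Second, the subadditivity $(a+b)^s\le a^s+b^s$ requires $s\le 1$, so it should be applied with $s=1-\frac{d}{2\tau}$ before squaring; after squaring, $s=2-\frac{d}{\tau}$ exceeds $1$ when $\tau>d$, which only changes constants.
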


The two terms inside the big-$\mathcal{O}$ notation play distinct roles. The leading term reflects the error rate of noiseless kernel interpolation \citep{wendland2004scattered}: it reflects how well the queries determine a function of smoothness $\tau$ and is the only term that remains as $\lambda \to 0$. The remainder term is the price of observation noise: it vanishes when $\lambda = 0$, but its exponent in $h_{X_n}$ is smaller than that of the leading term, so it decays more slowly and dominates the bound as the queries fill the domain whenever $\lambda > 0$.

Theorem~\ref{THM:PV_FILL} holds for any set of queries, including those generated by \eqref{EQN:PROB_EXP}. It therefore reduces the problem of bounding the posterior variance over the entire search space to a geometric one: how well do the queries cover \(\mathcal X\)? The most efficient coverage is achieved by point sets that remain well separated while leaving no large gaps in the domain. This is precisely the defining feature of quasi-uniform point sets. A family of point sets \(X_n\) is \emph{quasi-uniform} if its fill distance is comparable to its separation distance
$
q_{X_n}:=\frac{1}{2}\min_{i\neq j}\|x_i-x_j\|_2,
$
that is, if \(h_{X_n}\le c\,q_{X_n}\) for some constant \(c>0\) \citep{wendland2004scattered}. Such point sets satisfy \(h_{X_n}\asymp n^{-1/d}\), which is the fastest possible decay rate.


\subsection{Exploration Strategies}
Theorem~\ref{THM:PV_FILL} shows that reducing the fill distance improves the uniform posterior variance bound. This motivates exploration strategies that spread queries effectively across the search space. We consider two such strategies, greedy packing and random sampling, and analyze their fill-distance rates when the BO procedure \eqref{EQN:PROB_EXP} selects these exploration steps with probability \(p_t\) at round \(t\). Let \(M_T\) denote the number of exploration rounds among the first \(T\) iterations, and define
$ e_T:=\mathbb{E}[M_T]=\sum_{t=1}^T p_t, $
which we refer to as the \emph{exploration budget}.

\paragraph{Greedy packing.}
This strategy selects a point farthest from all previous
queries:
\[
x_t^{\mathrm{exp}}
\in
\arg\max_{x\in\mathcal{X}}
\min_{s<t}\|x-x_s\|_2.
\]
If no queries have yet been made, the first point is chosen arbitrarily. When every round explores, this reduces to the classical farthest-point construction, which generates quasi-uniform designs \citep{pronzato2023quasi}. The next result shows that this quasi-uniform coverage is preserved with high probability, up to the effective number of exploration rounds, even when arbitrary acquisition queries are interleaved.

\begin{proposition}[Fill distance under greedy packing]
\label{PROP:FILL_GREEDY}
Suppose that the exploration schedule satisfies
$e_T=\sum_{t=1}^T p_t\to\infty$.
Under greedy packing, for every $\delta\in(0,1)$, with probability at least $1-\delta$,
\[
h_{X_T} = \mathcal{O}\bigl(e_T^{-1/d}\bigr).
\]
\end{proposition}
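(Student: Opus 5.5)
The argument has two parts. The first is deterministic: at each exploration round, greedy packing places a point whose distance to all previous queries equals the current fill distance. The second is probabilistic: a concentration bound shows that the number of exploration rounds $M_T$ is of order $e_T$.

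For the deterministic step, let $t_1<t_2<\dots<t_{M}$ be the exploration rounds among the first $T$ iterations, with $M=M_T$. At round $t_j$ the greedy point satisfies
\[
\min_{s<t_j}\|x_{t_j}-x_s\|_2 = h_{X_{t_j-1}} =: r_j,
\]
where the equality uses that $\mathcal X$ is compact and $x\mapsto\min_s\|x-x_s\|_2$ is continuous, so the supremum defining the fill distance is attained. Adding points can only shrink the fill distance, so $h_{X_t}$ is nonincreasing in $t$. Hence $r_1\ge r_2\ge\dots\ge r_M\ge h_{X_T}$.

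I then look at the exploration points alone. For $i<j$, the point $x_{t_j}$ is at distance at least $r_j\ge h_{X_T}$ from every earlier query, and in particular from $x_{t_i}$. Any acquisition queries in between only enlarge the set of earlier queries, so this separation holds regardless of them. The $M$ exploration points are therefore pairwise at distance at least $h_{X_T}$.

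A volume (packing) argument finishes this step. The balls $B(x_{t_j},h_{X_T}/2)$ are disjoint. When $h_{X_T}\le \mathrm{diam}(\mathcal X)$, each ball meets $\mathcal X$ in a set of volume at least $c_{\mathcal X}\,h_{X_T}^d$, since a box satisfies an interior cone condition. All these pieces lie inside $\mathcal X$, so summing volumes gives $M\,c_{\mathcal X}h_{X_T}^d\le \mathrm{vol}(\mathcal X)$, that is,
\[
h_{X_T}\le C\,M_T^{-1/d}
\]
whenever $M_T\ge1$. This bound holds deterministically, for every realization of the acquisition queries.

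For the probabilistic step, $M_T=\sum_t E_t$ is a sum of independent Bernoulli variables with mean $e_T$. The multiplicative Chernoff bound gives
\[
\Pr\bigl(M_T\le e_T/2\bigr)\le \exp(-e_T/8).
\]
Since $e_T\to\infty$, this is at most $\delta$ once $e_T\ge 8\log(1/\delta)$. On the complementary event, $h_{X_T}\le C(e_T/2)^{-1/d}=\mathcal O(e_T^{-1/d})$.

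For the finitely many $T$ with $e_T<8\log(1/\delta)$, I use the trivial bound $h_{X_T}\le\mathrm{diam}(\mathcal X)$. This is at most $\mathrm{diam}(\mathcal X)\,(8\log(1/\delta))^{1/d}e_T^{-1/d}$, so it is absorbed into the $\mathcal O$ constant, which then depends on $\delta$ and $d$. Alternatively, the statement can be read as holding for $T$ large enough.

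The main subtlety is the first step. The separation must survive interleaved acquisition points, which depend on the past and are therefore random and adaptive. The key observation is that the greedy rule measures distance to all previous queries, so extra acquisition points only help. This makes the argument pathwise, and the only randomness left is in $M_T$. The independence of the $E_t$, which the scheme assumes, is exactly what the Chernoff bound requires.
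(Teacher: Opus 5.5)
Your proof is correct and follows the paper's argument. The steps match: the greedy exploration points are pathwise separated by at least $h_{X_T}$, since interleaved acquisition queries only enlarge the set of earlier points; a packing bound then gives $h_{X_T}\le C M_T^{-1/d}$; and $M_T$ concentrates around $e_T$, with the trivial bound $h_{X_T}\le \operatorname{diam}(\mathcal X)$ absorbed into the constant for small $T$. The differences are minor. You prove the packing bound with a direct volume argument where the paper cites Lemma~2.1 of Pronzato. You also use a multiplicative Chernoff bound instead of Chebyshev, which makes the constant depend on $\delta$ only through $(\log(1/\delta))^{1/d}$ rather than the $\delta^{-1/d}$ that Chebyshev yields.
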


\paragraph{Random sampling.}
This strategy draws each exploration point independently from a distribution with a positive density on \(\mathcal{X}\), such as the uniform distribution. The draws may be generated either in advance or sequentially as needed. In either case, they are independent of the exploration decisions and of all other algorithmic randomness. Unlike greedy packing, random sampling requires no optimization. Nevertheless, it achieves the same fill-distance rate as greedy packing up to logarithmic factors \citep[Lemma~2]{oates2019convergence}. The following result shows that this near-optimal coverage is retained with high probability when random-sampling exploration is interleaved with acquisition queries.

\begin{proposition}[Fill distance under random sampling]
\label{PROP:FILL_RANDOM}
Suppose that the exploration schedule satisfies
$e_T=\sum_{t=1}^T p_t\to\infty$.
Under random sampling from a distribution whose density is positive on $\mathcal{X}$,
for every $\delta\in(0,1)$,
with probability at least $1-\delta$,
\[
h_{X_T}=\tilde{\mathcal{O}}(e_T^{-1/d}),
\]
where $\tilde{\mathcal{O}}$ denotes $\mathcal{O}$ up to polylogarithmic order terms.
\end{proposition}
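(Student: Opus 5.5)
The plan combines two ingredients: a concentration bound showing that the number of exploration rounds $M_T$ is of order $e_T$, and a covering argument showing that $m$ i.i.d.\ draws from a density bounded below fill $\mathcal{X}$ at rate $\tilde{\mathcal{O}}(m^{-1/d})$. The starting point is monotonicity: adding points can only decrease the fill distance. So if $Z_{M_T}$ denotes the set of exploration points among the first $T$ queries, then $h_{X_T}\le h_{Z_{M_T}}$, and the acquisition queries can be ignored entirely. Moreover, the exploration draws $z_1,z_2,\dots$ are i.i.d.\ and independent of the Bernoulli variables $E_t$ and of all other randomness. Hence $Z_{M_T}=\{z_1,\dots,z_{M_T}\}$ is the first $M_T$ elements of a fixed i.i.d.\ sequence, and $h_{Z_m}$ is nonincreasing in $m$. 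This reduces the problem to two independent events, each held with probability at least $1-\delta/2$.

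\emph{Step 1: the count $M_T$.} Since $M_T=\sum_t E_t$ is a sum of independent Bernoullis with mean $e_T$, the multiplicative Chernoff bound gives $\Pr(M_T<e_T/2)\le \exp(-e_T/8)$. This is at most $\delta/2$ once $e_T\ge 8\log(2/\delta)$, which holds for all large $T$ because $e_T\to\infty$. The finitely many earlier $T$ are absorbed into the constant, since $h_{X_T}\le \mathrm{diam}(\mathcal{X})$ always. On this event, monotonicity gives $h_{X_T}\le h_{Z_{m}}$ with $m=\lceil e_T/2\rceil$.

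\emph{Step 2: fill distance of $m$ i.i.d.\ points.} Let the density be $\rho$, and take $\rho_{\min}>0$ to be its infimum on the compact box. This requires continuity or an explicit lower bound, which I would assume as the meaning of ``positive.'' Partition $\mathcal{X}$ into $N\asymp r^{-d}$ axis-aligned cubes of side $r$. If every cube contains a sample, then $h_{Z_m}\le \sqrt{d}\,r$. Each cube has mass at least $\rho_{\min}r^d$, so the union bound gives
\[
\Pr(\text{some cube empty})\le N(1-\rho_{\min}r^d)^m\le C r^{-d}\exp(-\rho_{\min} r^d m).
\]
Choose $r^d=\frac{\log(2Cm/\delta)}{\rho_{\min} m}$. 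Then $r^{-d}\le m$ and the bound is at most $\delta/2$. This yields $h_{Z_m}=\mathcal{O}\bigl((\log(m/\delta)/m)^{1/d}\bigr)$, which matches \citep[Lemma~2]{oates2019convergence} and can alternatively be cited directly.

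\emph{Step 3: combine.} A union bound over the two events gives, with probability at least $1-\delta$,
\[
h_{X_T}=\mathcal{O}\bigl((\log(e_T/\delta)/e_T)^{1/d}\bigr)=\tilde{\mathcal{O}}(e_T^{-1/d}).
\]
The main obstacle is the subtle independence issue. The index set of exploration rounds is random and interleaved with adaptive acquisition queries, so one might worry that the exploration points are not i.i.d.\ given $M_T$. Indexing the draws by exploration count rather than by round, and using that $\{z_j\}$ is independent of $\{E_t\}$, resolves this. In the sequential-generation case, the $j$-th exploration draw is fresh randomness independent of the past, so the sequence $(z_j)$ has the same i.i.d.\ law. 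A secondary point is that $m=\lceil e_T/2\rceil$ is deterministic, so Step 2 needs no uniformity over $m$.
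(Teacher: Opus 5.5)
Your proof is correct, but it takes a different route from the paper's. The paper works in expectation:
\begin{itemize}
\item it takes $\mathbb{E}[H_m]\lesssim(\log m/m)^{1/d}$ from the proof of \citet[Lemma~2]{oates2019convergence};
\item it conditions on $\{M_T=m\}$ and uses independence of $M_T$ from the draws to get $\mathbb{E}[h_{X_T}\mid M_T=m]\le\mathbb{E}[H_m]$;
\item it handles small counts with Chebyshev, $\mathbb{P}(M_T<(1-\eta)e_T)\le 1/(\eta^2 e_T)$, together with the trivial bound $h_{X_T}\le\operatorname{diam}(\mathcal X)$;
\item it averages over $M_T$ to obtain $\mathbb{E}[h_{X_T}]\lesssim(\log e_T/e_T)^{1/d}$, and only then passes to a high-probability statement via Markov's inequality.
\end{itemize}

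You stay with high-probability events throughout: a Chernoff tail for $M_T$, a self-contained cube-covering union bound for a deterministic number $m=\lceil e_T/2\rceil$ of i.i.d.\ draws, and pathwise monotonicity of the fill distance, combined by a union bound. This gives a much better confidence dependence, $(\log(e_T/\delta)/e_T)^{1/d}$, instead of the multiplicative factor $1/\delta$ that Markov produces. It also avoids the conditioning step and the uniformity over all counts $m\ge(1-\eta)e_T$ that the paper handles through monotonicity of $(\log u)/u$. In fact your union bound does not need independence between $M_T$ and the draws; it only needs the draws, indexed by exploration count, to be i.i.d. The paper's route, in turn, yields a bound on $\mathbb{E}[h_{X_T}]$ itself, uses only second moments of $M_T$, and delegates the covering argument to a citation.

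Your reading of ``positive density'' as bounded away from zero is appropriate rather than a gap, since the paper's appeal to Oates et al.\ relies on the same lower bound. Two cosmetic points:
\begin{itemize}
\item A general box need not split exactly into cubes of side $r$; cells with side lengths in $[r/2,r]$ change only the constants.
\item The claim $r^{-d}\le m$ requires $\log(2Cm/\delta)\ge\rho_{\min}$, which holds only for $m$ large. That case is covered by your absorption of small $T$ into the constant.
\end{itemize}
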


\begin{remark}
Although we focus on greedy packing and random sampling, the fill-distance formulation of Theorem~\ref{THM:PV_FILL} allows other exploration strategies to be incorporated directly. For instance, under suitable regularity conditions, $\gamma$-stabilized greedy algorithms generate quasi-uniform point sets \citep{wenzel2021novel} and hence satisfy
$
h_{X_T}=\mathcal{O}(e_T^{-1/d}).
$
They therefore yield the same posterior variance rate as greedy packing. 
\end{remark}

Combining Propositions~\ref{PROP:FILL_GREEDY} and \ref{PROP:FILL_RANDOM} with Theorem~\ref{THM:PV_FILL} yields uniform bounds on the GP posterior variance.

\begin{corollary}[Posterior variance under exploration]
\label{COR:PV_RATES}
Under Assumption~\ref{ASSUMP:KERNEL} and the conditions of
Propositions~\ref{PROP:FILL_GREEDY} and~\ref{PROP:FILL_RANDOM}, for every
$\delta\in(0,1)$, with probability at least $1-\delta$,
\[
\sup_{x\in\mathcal{X}}\sigma_T^2(x)
=
\begin{cases}
\mathcal{O}\bigl(e_T^{-\frac{2\nu}{2\nu+d}}\bigr),
& \text{Mat\'ern, greedy},\\
\tilde{\mathcal{O}}\bigl(e_T^{-\frac{2\nu}{2\nu+d}}\bigr),
& \text{Mat\'ern, random}.
\end{cases}
\]
For the SE kernel, for any arbitrarily small \(\varepsilon\in(0,1)\), either exploration strategy satisfies, with probability at least \(1-\delta\),
\[
\sup_{x\in\mathcal{X}}\sigma_T^2(x)
=
\mathcal{O}\bigl(e_T^{-(1-\varepsilon)}\bigr).
\]
\end{corollary}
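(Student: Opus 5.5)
The plan is to substitute the high-probability fill-distance bounds of Propositions~\ref{PROP:FILL_GREEDY} and~\ref{PROP:FILL_RANDOM} into the deterministic bound of Theorem~\ref{THM:PV_FILL}. The only work is exponent bookkeeping and checking that the condition $h_{X_T}\le h_0$ eventually holds.

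First, fix $\delta\in(0,1)$ and the exploration rule, and let $\Omega_\delta$ be the event of probability at least $1-\delta$ on which Proposition~\ref{PROP:FILL_GREEDY} (respectively~\ref{PROP:FILL_RANDOM}) gives $h_{X_T}\le C e_T^{-1/d}$ (respectively $h_{X_T}\le C e_T^{-1/d}\,\mathrm{polylog}(e_T)$). Since $e_T\to\infty$, the right-hand side is at most $h_0$ for all $T$ beyond some threshold $T_0$, so on $\Omega_\delta$ Theorem~\ref{THM:PV_FILL} applies for $T\ge T_0$. The finitely many $T<T_0$ are harmless, because $\sigma_T^2\le \sup_x k(x,x)$ can be absorbed into the constant. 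Theorem~\ref{THM:PV_FILL} holds for every query set, so it applies to the random set $X_T$ pathwise on $\Omega_\delta$, and no further probabilistic argument is needed. The bound is monotone in $h_{X_T}$, so we may replace $h_{X_T}$ by its upper bound.

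\textbf{Matérn case.} Here $\tau=\nu+d/2$, so $2\tau-d=2\nu$ and $1-\frac{d}{2\tau}=\frac{2\nu}{2\nu+d}$. Theorem~\ref{THM:PV_FILL} therefore gives
\[
\sup_x\sigma_T^2(x)=\mathcal{O}\bigl(h_{X_T}^{2\nu}+\lambda^{\frac{2\nu}{2\nu+d}}h_{X_T}^{\frac{2\nu d}{2\nu+d}}\bigr).
\]
Inserting $h_{X_T}\lesssim e_T^{-1/d}$ yields $e_T^{-2\nu/d}+e_T^{-2\nu/(2\nu+d)}$. For fixed $\lambda>0$ the second term dominates, because $2\nu/d>2\nu/(2\nu+d)$. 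This proves the greedy case. For random sampling, the polylogarithmic factor in $h_{X_T}$ raised to the positive power $2\nu d/(2\nu+d)$ remains polylogarithmic, which gives the $\tilde{\mathcal{O}}$ statement.

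\textbf{SE case.} Given $\varepsilon\in(0,1)$, choose an integer $\tau>d/2$ with $\frac{d}{2\tau}\le\varepsilon/2$; this is admissible since Assumption~\ref{ASSUMP:KERNEL} holds for every integer $\tau>d/2$. The second term is then $\lesssim e_T^{-(1-d/(2\tau))}\le e_T^{-(1-\varepsilon/2)}$. The first term is $e_T^{-(2\tau-d)/d}$, which is even smaller once $\tau\ge d$. For random sampling, the polylog factor is absorbed by the slack $e_T^{-\varepsilon/2}$, giving $\mathcal{O}(e_T^{-(1-\varepsilon)})$ for both rules. The constants, and $h_0$, depend on $\tau$ and hence on $\varepsilon$. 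This is acceptable because $\varepsilon$ is fixed before taking $T\to\infty$.

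The main subtlety is not technical depth but two points of care. First, the threshold $h_{X_T}\le h_0$ must hold on the same high-probability event, which the argument for $T\ge T_0$ handles. Second, in the SE case the choice of $\tau$ must be made uniformly, with $\tau$ fixed in terms of $\varepsilon$ before $T$ varies, so that the hidden constants do not depend on $T$.
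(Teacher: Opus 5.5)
Your proposal is correct and follows essentially the same route as the paper. You substitute the fill-distance bounds of Propositions~\ref{PROP:FILL_GREEDY} and~\ref{PROP:FILL_RANDOM} into Theorem~\ref{THM:PV_FILL}, note that the $\lambda$-dependent term $h_{X_T}^{d(1-d/(2\tau))}$ dominates, and use $e_T\to\infty$ to ensure $h_{X_T}\le h_0$ for large $T$ while absorbing the finitely many small $T$ into the constant; in the SE case you fix $\tau$ depending only on $\varepsilon$, large enough that the polylogarithmic factors are absorbed by the slack in the exponent. This is exactly the paper's argument, with your explicit check that the first term is of lower order and your use of $\sigma_T^2\le\sup_x k(x,x)$ simply filling in details the paper states in one line.
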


Corollary~\ref{COR:PV_RATES} expresses posterior variance decay in terms of
the exploration budget $e_T$. For a concrete exploration schedule, these
rates can be translated directly into rates in the total number of BO
iterations.

\begin{remark}[Explicit rates]
\label{REM:EXPLICIT_RATES}
Consider the exploration schedule
$
p_t=p_0t^{-\alpha},~
p_0\in(0,1],
~
\alpha\in[0,1).
$
Since $e_T\asymp T^{1-\alpha}$,
Corollary~\ref{COR:PV_RATES} gives, for every fixed
$\delta\in(0,1)$, with probability at least $1-\delta$,
\[
\sup_{x\in\mathcal{X}}\sigma_T^2(x)
=
\begin{cases}
\mathcal{O}\bigl(T^{-(1-\alpha)\frac{2\nu}{2\nu+d}}\bigr),
& \text{Mat\'ern, greedy},\\
\tilde{\mathcal{O}}\bigl(
T^{-(1-\alpha)\frac{2\nu}{2\nu+d}}\bigr),
& \text{Mat\'ern, random},\\
\mathcal{O}\bigl(T^{-(1-\alpha)(1-\varepsilon)}\bigr),
& \text{SE, either strategy},
\end{cases}
\]
for an arbitrarily small $\varepsilon\in(0,1)$. A smaller $\alpha$ allocates
more rounds to exploration, resulting in faster posterior variance decay.
\end{remark}

\begin{remark}[Comparison with existing posterior variance bound]
\label{REM:PV_COMPARISON}
When $\alpha=0$, our rates are comparable to existing posterior variance bounds obtained via maximum information gain based analysis. In particular, pure random exploration achieves
$
\tilde{\mathcal O}\left(T^{-2\nu/(2\nu+d)}\right)
$
for Matérn kernels \citep{salgia2023random}, while the MVR strategy achieves the same rate \citep{vakili2021optimal}. With our greedy packing strategy, we further improve this rate by eliminating the polylogarithmic factors.
\end{remark}

\begin{remark}[Computational cost versus statistical efficiency]
\label{REM:RANDOM_EFFICIENCY}
Random exploration achieves a rate comparable to that of greedy packing, up to a mild additional factor. Computationally, however, it is considerably simpler: each exploration point is obtained through a random draw and requires no optimization over $\mathcal{X}$. It therefore offers an efficient alternative to optimization-based exploration strategies such as greedy packing, $\gamma$-stabilized greedy algorithms, and MVR, while retaining nearly the same posterior variance decay rate.
\end{remark}

\section{Inference Regret Bounds}
In this subsection, using the previously established posterior variance bounds, we establish inference regret bounds for the recommendation
$
\hat{x}_T = \arg\max_{x \in \mathcal{X}} \mu_T(x):
$
the maximizer of the final posterior mean. This choice of final output is a popular recommendation rule in noisy Bayesian optimization, as it aggregates all observed information through the posterior. The following theorem establishes the inference regret bound for this recommendation using the confidence bound of \citet{flynn2025tighter}.

\begin{theorem}[Inference regret under probabilistic exploration]
\label{THM:INFERENCE_GENERAL}
Suppose Assumption~\ref{ASSUMP:KERNEL} holds with a fixed nugget $\lambda > 0$ and \(f\in\mathcal H_k(\mathcal X)\) satisfies
$
\|f\|_{\mathcal H_k}\le B
$ for some $B \in \mathbb{R}_{>0}$. Consider the exploration scheme \eqref{EQN:PROB_EXP} with
either greedy packing or random sampling and
$
e_T=\sum_{t=1}^T p_t\longrightarrow\infty.
$
Then, for every $\delta\in(0,1)$, with probability at least $1-\delta$,
\[
I_T
=
\begin{cases}
\displaystyle
\tilde{\mathcal O}\left(
T^{\frac{d}{2(2\nu+d)}}
e_T^{-\frac{\nu}{2\nu+d}}
\right),
& \text{Mat\'ern},\\
\displaystyle
\tilde{\mathcal O}\bigl(
e_T^{-\frac{1-\varepsilon}{2}}
\bigr),
& \text{SE},
\end{cases}
\]
for an arbitrarily small $\varepsilon\in(0,1)$.
\end{theorem}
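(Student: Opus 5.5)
The plan is to use the standard two-sided confidence bound argument for the posterior mean maximizer. We will then control the confidence width uniformly through Corollary~\ref{COR:PV_RATES}.

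\textbf{Step 1: confidence bound.} We invoke the confidence bound of \citet{flynn2025tighter}, stated for adaptively chosen queries with $R$-sub-Gaussian noise and $f\in\mathcal H_k$ with $\|f\|_{\mathcal H_k}\le B$. It gives, with probability at least $1-\delta/2$, simultaneously for all $x\in\mathcal X$,
\[
|f(x)-\mu_T(x)|\le \beta_T(\delta/2)\,\sigma_T(x),
\]
where the radius $\beta_T$ grows at most like $B+R\lambda^{-1/2}\sqrt{\gamma_T+\log(1/\delta)}$, up to logarithmic factors. The radius therefore scales with the square root of an information-gain-type quantity. Such a bound holds for any query rule, so it covers the mixed acquisition/exploration sequence \eqref{EQN:PROB_EXP}.

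\textbf{Step 2: regret decomposition.} On this event, since $\mu_T(\hat x_T)\ge\mu_T(x^\star)$,
\[
I_T=f(x^\star)-f(\hat x_T)\le \mu_T(x^\star)+\beta_T\sigma_T(x^\star)-\mu_T(\hat x_T)+\beta_T\sigma_T(\hat x_T)\le 2\beta_T\sup_{x\in\mathcal X}\sigma_T(x).
\]
We intersect this event with the event of probability $1-\delta/2$ on which Corollary~\ref{COR:PV_RATES} holds, applied with $\delta/2$. A union bound then gives probability at least $1-\delta$. The requirement $h_{X_T}\le h_0$ in Theorem~\ref{THM:PV_FILL} holds for all large $T$, because $e_T\to\infty$. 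Finitely many small $T$ are absorbed into the constant, since $I_T\le 2\|f\|_\infty\le 2B\sup_x\sqrt{k(x,x)}$.

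\textbf{Step 3: plugging in rates.} For Matérn, the known bound is $\gamma_T=\tilde{\mathcal O}(T^{d/(2\nu+d)})$, so $\beta_T=\tilde{\mathcal O}(T^{d/(2(2\nu+d))})$. Corollary~\ref{COR:PV_RATES} gives $\sup\sigma_T=\tilde{\mathcal O}(e_T^{-\nu/(2\nu+d)})$. Their product is exactly the claimed Matérn rate.

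For SE, $\gamma_T=\mathcal O(\log^{d+1}T)$, so $\beta_T$ is polylogarithmic. Combined with $\sup\sigma_T=\mathcal O(e_T^{-(1-\varepsilon)/2})$, this yields the $\tilde{\mathcal O}(e_T^{-(1-\varepsilon)/2})$ rate.

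\textbf{Main obstacle.} The delicate point is the confidence radius. Its $T$-dependence must come from $\gamma_T$ evaluated at the actual adaptive design, which is bounded by the worst-case maximal information gain. The noise-dependent term must be in the form given by \citet{flynn2025tighter}, so that it holds uniformly over $x$ and under adaptivity; the exploration points are random and independent of everything else, so the filtration argument still applies.

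A secondary check concerns the random-sampling case. Corollary~\ref{COR:PV_RATES} holds at the fixed horizon $T$, which is all that is needed here. Still, I would verify that its event and the confidence event are both measurable with respect to the same run, so that the union bound is legitimate.
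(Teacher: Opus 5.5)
Your proposal is correct and follows essentially the same route as the paper. The paper likewise uses $I_T\le 2\sup_{x}|f(x)-\mu_T(x)|$, applies the Flynn--Reeb confidence bound at level $\delta/2$ with radius $[B^2+2R^2\lambda^{-1}(\gamma_T(\lambda)+\log(2/\delta))]^{1/2}$, takes a union bound with Corollary~\ref{COR:PV_RATES} at level $\delta/2$, and substitutes the maximum information gain rates for the Mat\'ern and SE kernels. Your explicit absorption of small $T$ through $I_T\le 2B\sup_x\sqrt{k(x,x)}$ is a minor detail that the paper leaves implicit.
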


\begin{remark}[Polynomially decaying exploration]
\label{REM:POLY_EXPLORATION}
Consider the exploration schedule
$
p_t=p_0t^{-\alpha},~
p_0\in(0,1],
~
\alpha\in[0,1).
$
Since
$
e_T\asymp T^{1-\alpha},
$
Theorem~\ref{THM:INFERENCE_GENERAL} gives
\[
I_T
=
\begin{cases}
\displaystyle
\tilde{\mathcal O}\left(
T^{-\frac{2\nu(1-\alpha)-d}
        {2(2\nu+d)}}
\right),
& \text{Mat\'ern},\\
\displaystyle
\tilde{\mathcal O}\left(
T^{-\frac{(1-\alpha)(1-\varepsilon)}{2}}
\right),
& \text{SE}.
\end{cases}
\]
Thus, with a fixed nugget $\lambda > 0$, the Mat\'ern bound vanishes whenever
$
2\nu(1-\alpha)>d,
$
while the SE bound vanishes for every $\alpha\in[0,1)$. 
\end{remark}

\begin{remark}[Improved rate with varying regularization]
\label{REM:VARYING_LAMBDA}
The bound in Theorem~\ref{THM:INFERENCE_GENERAL} can be improved by appropriately choosing the nugget  \(\lambda\). In particular, for the
Mat\'ern kernel, retaining the dependence on $\lambda$ in both the
posterior variance and the adaptive confidence bound gives
\[
I_T
=
\tilde{\mathcal O}\left(
e_T^{-\frac{\nu}{2\nu+d}}
\left(
\lambda^{\frac{\nu}{2\nu+d}}
+
T^{\frac{d}{2(2\nu+d)}}
\lambda^{-\frac{d}{2\nu+d}}
\right)
\right).
\]
Balancing the two terms, we can set
$
\lambda
\asymp
T^{\frac{d}{2\nu+2d}},
$
which leads to
$
I_T
=
\tilde{\mathcal O}\left(
T^{\frac{d\nu}
        {2(\nu+d)(2\nu+d)}}
e_T^{-\frac{\nu}{2\nu+d}}
\right).
$
Hence, under the polynomially decaying exploration schedule
$p_t=p_0t^{-\alpha}$,
$
I_T
=
\tilde{\mathcal O}\left(
T^{-\frac{\nu}{2\nu+2d}
+\frac{\alpha\nu}{2\nu+d}}
\right),
$
for $\alpha \in [0,1)$. This improves upon the condition in Theorem~\ref{THM:INFERENCE_GENERAL}. In particular, the bound vanishes whenever
$
\alpha < \frac{2\nu+d}{2\nu+2d}.
$
\end{remark}

\begin{remark}[Improved rate with non adaptive design] 
A sharper inference regret bound can be obtained when every round is devoted to exploration, by leveraging the tighter confidence bound in \citet{vakili2021optimal}.
\end{remark}

\section{Numerical Experiments}
\label{sec:experiments}
We evaluate the proposed exploration-aided BO strategies on benchmark test functions, hyperparameter optimization (HPO), and Lunar Lander control. Complete experimental details and additional results are provided in Appendix~\ref{APPEND:ADD_EXP}. A code repository for reproducing the numerical experiments in this section is available at \url{https://github.com/hsleejw21/probabilistic-exploration}. 

\begin{figure*}[!t]
\centering
\includegraphics[width=\textwidth]{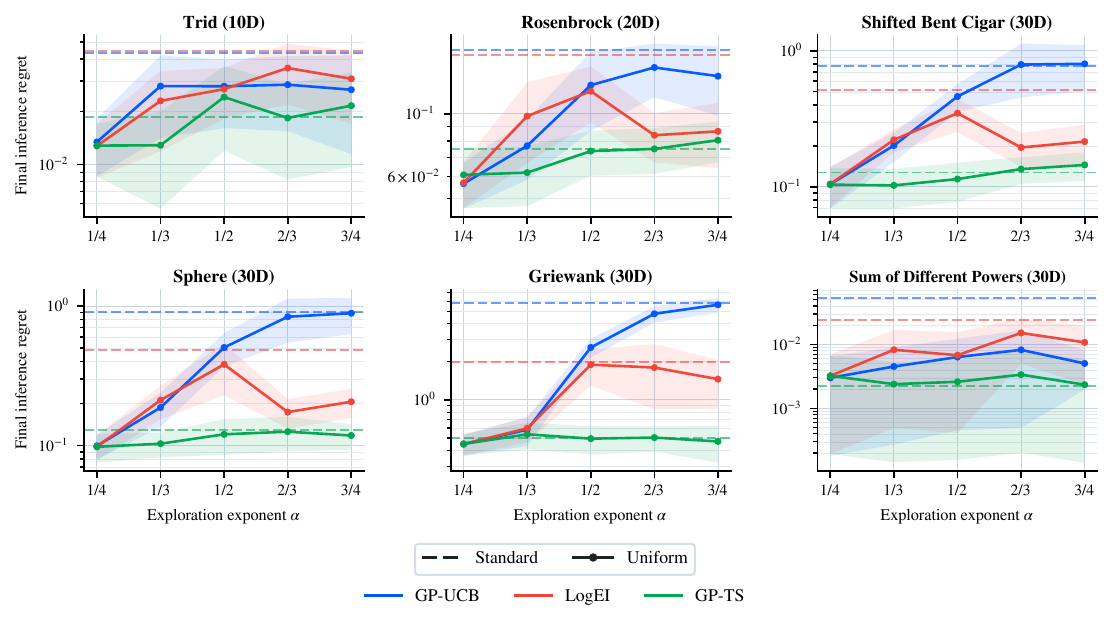}
\caption{\textbf{Schedule sensitivity of uniform random exploration.} Average final inference regret over 20 independent runs across six synthetic benchmarks, with shaded bands representing one standard deviation. Dashed horizontal lines denote standard BO, while solid curves denote BO with uniform random exploration. Lower values indicate better performance.}
\label{fig:exp_synthetic_alpha_sweep}
\end{figure*}

\paragraph{Synthetic benchmark functions.}
Figure~\ref{fig:exp_synthetic_alpha_sweep} examines the effectiveness of combining uniform random exploration with popular BO acquisition functions on six benchmark objectives \citep{simulationlib,Al-Roomi2015}, with dimensions ranging from 10 to 30. Each curve reports the mean final inference regret over 20 independent runs. Over a broad range of \(\alpha\), complementing standard BO with random exploration generally reduces final inference regret, with particularly consistent improvements for GP-UCB and LogEI \citep{ament2023unexpected}. The results also suggest that greater exploration can be beneficial in higher dimensions, although the preferred schedule depends on both the objective and acquisition function. In particular, \(\alpha=1/2\) yields lower average final inference regret in 16 of 18 scenarios, including every GP-UCB and LogEI case. In Appendix~\ref{app:exp_synthetic}, we plot inference regret trajectories for both the common choice \(\alpha=1/2\) and a calibrated choice of \(\alpha\), and report final inference regret across all objective--acquisition pairs.

\begin{figure*}[!t]
\centering
\includegraphics[width=\textwidth]{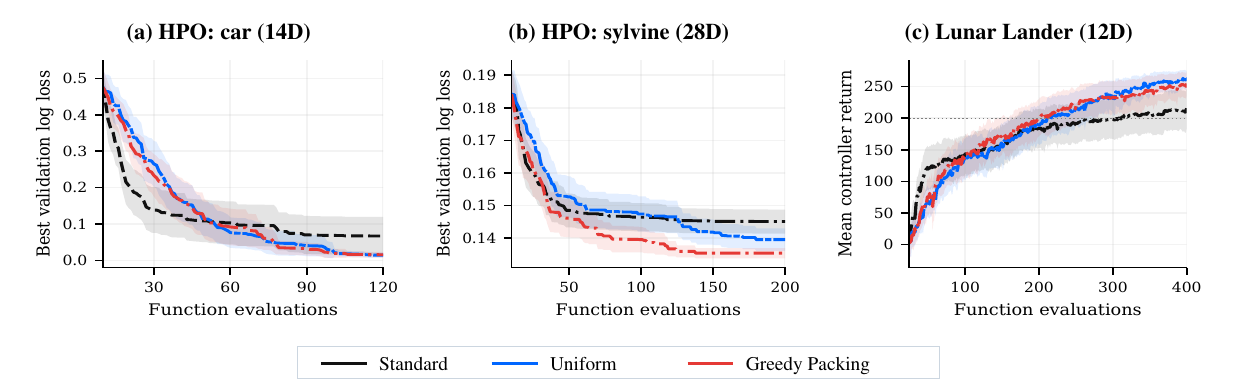}
\caption{\textbf{HPO and control trajectories.} All methods use LogEI; both exploration rules use $\alpha=1/2$. Means are shown with pointwise \(1.96\)-standard error bands over 30 independent runs. Panels use separate vertical scales: (a,b) best validation log loss (lower is better); (c) recommended controller return on 50 terrains fixed within each optimization run (higher is better). The dotted line marks the threshold of mean return 200.}
\label{fig:exp_applications}
\end{figure*}

\paragraph{Hyperparameter optimization and control.}
YAHPO Gym \citep{pfisterer2022yahpo} provides surrogate benchmarks for evaluating HPO methods without repeated model training. Figure~\ref{fig:exp_applications}(a,b) shows the 14D \texttt{car} and 28D \texttt{sylvine} tasks. In both cases, augmenting BO with either random exploration or greedy packing improves the final validation loss over Standard BO. In the experiments, we replace exact greedy packing with an approximate greedy-packing rule over a finite candidate set, providing a computationally efficient implementation that avoids solving a continuous optimization problem at each exploration step \citep{pronzato2023quasi}. Additional HPO results and detailed experimental settings are provided in Appendix~\ref{app:exp_hpo}. Following the Lunar Lander controller-tuning setup of \citet{eriksson2019scalable}, we additionally use BO to optimize a 12-parameter heuristic controller over 400 evaluations. Figure~\ref{fig:exp_applications}(c) shows that both exploration-aided strategies improve the recommended controller over Standard BO, with higher mean returns. Experimental details and a visual illustration of the Lunar Lander controller performance are provided in Appendix~\ref{app:exp_lunar}.

\paragraph{Computational efficiency.}
On two-dimensional benchmark functions, we compare standard BO with variants that augment the base acquisition rule using either uniform random exploration or MVR-based exploration. BO with uniform random exploration achieves the lowest mean final inference regret while reducing average runtime by roughly 40\% relative to both standard BO and MVR-based exploration. These results highlight the computational efficiency of incorporating a simple random sampling-based exploration strategy into BO. Experimental details and numerical results are provided in Appendix~\ref{app:exp_lowdim_mvr}.
 
\section{Conclusion}
\label{sec:conclusion}
We have established posterior variance bounds in terms of fill distance, a geometric measure of how well the query locations cover the search space. We use this framework to study a range of exploration strategies and establish their inference regret guarantees. Among these strategies, random exploration stands out as a remarkably simple yet effective choice: it requires no additional optimization while retaining nearly the same theoretical performance as more sophisticated space-filling strategies. Our experiments further confirm its robust and efficient performance across benchmark functions and acquisition strategies. We hope this work encourages broader adoption of exploration strategies as a lightweight safeguard in practical BO systems.

\bibliographystyle{unsrtnat}
\bibliography{references}


\clearpage
\appendix


\section{Background and Notations}
Let
\[
\mathcal X
:=
\prod_{j=1}^d[a_j,b_j]
\subset\mathbb R^d,
\qquad
\mathcal X^\circ
:=
\operatorname{int}(\mathcal X)
=
\prod_{j=1}^d(a_j,b_j).
\]
Thus, $\mathcal X=\overline{\mathcal X^\circ}$ is the compact
hyperrectangular optimization domain, while Sobolev spaces are defined
on the open hyperrectangle $\mathcal X^\circ$. The domain
$\mathcal X^\circ$ is a bounded Lipschitz domain and satisfies a
uniform interior cone condition.

Given query locations
$
X_n
:=
\{x_1,\ldots, x_n\}
\subset\mathcal X,
$
define their fill distance in $\mathcal X$ by
\begin{equation}
\label{EQ:FILL_DISTANCE_DEF}
h_{X_n}
:=
\sup_{x\in\mathcal X}
\min_{1\le i\le n}
\| x-x_i\|_2.
\end{equation}
Since
$
 x
\mapsto
\min_{1\le i\le n}
\| x-x_i\|_2
$
is continuous and $\mathcal X^\circ$ is dense in $\mathcal X$,
\begin{equation}
\label{EQ:FILL_DISTANCE_INTERIOR}
h_{X_n}
=
\sup_{ x\in\mathcal X^\circ}
\min_{1\le i\le n}
\| x- x_i\|_2.
\end{equation}

Let
$
k:\mathcal X\times\mathcal X\to\mathbb R
$
be a positive-definite kernel with reproducing kernel
Hilbert space $\mathcal H_k(\mathcal X)$. For
$ x\in\mathcal X$, define the kernel vector
\[
k_n( x)
:=
\begin{bmatrix}
k( x_1, x)\\
\vdots\\
k(x_n,x)
\end{bmatrix}
\in\mathbb R^n
\]
and the kernel matrix
\[
 K_{nn}
:=
\bigl[
k(x_i,x_j)
\bigr]_{i,j=1}^n
\in\mathbb R^{n\times n}.
\]
Let
$
 I\in\mathbb R^{n\times n}
$
denote the identity matrix. With a nugget
$\lambda>0$, the GP posterior variance is
\begin{equation}
\label{EQ:POST_VAR_FILL}
\sigma_n^2(x)
=
k(x,x)
-
 k_n(x)^\top
\bigl(
 K_{nn}
+
\lambda I
\bigr)^{-1}
 k_n(x).
\end{equation}
Finally, throughout this section we write $W_2^\tau(\mathcal X^\circ)$ for the Sobolev space of order $\tau$ on the interior $\mathcal X^\circ$ of $\mathcal X$, that is, the space of square-integrable functions on $\mathcal X^\circ$ whose (weak) derivatives up to order $\tau$ are also square-integrable. We now present the lemmas used to bound the posterior standard deviation.

\begin{lemma}[Sampling inequality, Theorem~3.2 of
\citet{arcangeli2012extension}]
\label{LEMMA:SAMPLING_FILL}
Let $\tau>d/2$. For
$g\in W_2^\tau(\mathcal X^\circ)$, let
\[
 g_n
:=
\begin{bmatrix}
g(x_1)\\
\vdots\\
g(x_n)
\end{bmatrix}
\in\mathbb R^n,
\]
where point evaluations are understood using the continuous
representative of $g$ on
$\overline{\mathcal X^\circ}=\mathcal X$.
Then there exist constants $C>0$ and $h_0>0$ such that, whenever $h_{X_n}\le h_0$,
\begin{equation}
\label{EQ:SAMPLING_FILL}
\|g\|_{L_2(\mathcal X^\circ)}
\le
C
\left(
h_{X_n}^{\tau}
\|g\|_{W_2^\tau(\mathcal X^\circ)}
+
h_{X_n}^{d/2}
\|g_n\|_2
\right).
\end{equation}
\end{lemma}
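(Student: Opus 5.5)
The plan is the classical local-polynomial argument behind sampling inequalities, made easy by the fact that $\mathcal X^\circ$ is a box. Fix the integer $m:=\lceil\tau\rceil-1$, so that $m<\tau\le m+1$, and let $\pi_m$ denote polynomials of degree at most $m$ on $\mathbb R^d$. The proof has three steps: tile the domain by small cubes, control $g$ on each cube via a local polynomial, then sum.

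\textbf{Step 1: tiling and norming sets.} For a constant $K\ge 1$ to be chosen, partition $\mathcal X^\circ$ (up to null sets) into axis-parallel cubes $D_j$ of side length between $Kh_{X_n}$ and $2Kh_{X_n}$. This is possible because $\mathcal X^\circ$ is a box, so all cubes lie inside the domain and the boundary causes no difficulty. Since $h_{X_n}\le h_0$, each $D_j$ contains every point of the domain within distance $h_{X_n}$ of some grid of $\mathcal X$-points inside $D_j$. Hence $X_n\cap D_j$ is $h_{X_n}$-dense in a slightly shrunken cube. I would then use a Markov-inequality argument for polynomials on cubes, the standard norming-set lemma of \citet{wendland2004scattered}. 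For $K$ large enough, depending only on $m$ and $d$, it gives
\[
\|p\|_{L_\infty(D_j)}\le 2\max_{x_i\in D_j}|p(x_i)| \qquad\text{for all } p\in\pi_m .
\]
I expect this to be the main obstacle: it must be set up so that $K$, and hence $h_0$, is uniform in $j$ and $n$. Choosing $h_0$ small relative to $\min_j(b_j-a_j)/K$ guarantees that at least one cube fits in each direction.

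\textbf{Step 2: local estimate.} Let $p_j\in\pi_m$ be the averaged Taylor polynomial of $g$ on $D_j$. Apply the Bramble--Hilbert lemma in its fractional form if $\tau\notin\mathbb N$, together with the Sobolev embedding $W_2^\tau\hookrightarrow L_\infty$ (valid since $\tau>d/2$), on the unit cube and rescale. This gives
\[
\|g-p_j\|_{L_\infty(D_j)}\le C\,h_{X_n}^{\tau-d/2}\,|g|_{W_2^\tau(D_j)} .
\]
Then, using the norming inequality and $|p_j(x_i)|\le|g(x_i)|+\|g-p_j\|_{L_\infty(D_j)}$,
\[
\|g\|_{L_2(D_j)}\le |D_j|^{1/2}\bigl(\|g-p_j\|_{L_\infty(D_j)}+\|p_j\|_{L_\infty(D_j)}\bigr)
\le C\,h_{X_n}^{d/2}\Bigl(h_{X_n}^{\tau-d/2}|g|_{W_2^\tau(D_j)}+\max_{x_i\in D_j}|g(x_i)|\Bigr).
\]

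\textbf{Step 3: summation.} Square the local estimate and sum over $j$. The cubes are disjoint, and in the fractional case the seminorms over the cubes are superadditive. Hence $\sum_j|g|^2_{W_2^\tau(D_j)}\le|g|^2_{W_2^\tau(\mathcal X^\circ)}$. Each point $x_i$ belongs to at most one cube (or a bounded number, if boundaries are shared), so $\sum_j\max_{x_i\in D_j}|g(x_i)|^2\le C\|g_n\|_2^2$. Taking square roots yields \eqref{EQ:SAMPLING_FILL}, with the seminorm bounded by the full norm and constants depending only on $d$, $\tau$ and $\mathcal X$.
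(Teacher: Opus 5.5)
Your argument is correct. It differs from the paper mainly in that the paper gives no proof at all: it imports the inequality as a special case of Theorem~3.2 of \citet{arcangeli2012extension}, namely the case with $L_2$ target, $W_2^\tau$ smoothness, $\ell_2$ data and no derivatives. That theorem is a general sampling inequality on bounded Lipschitz domains with fractional seminorms.

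You instead prove the needed case directly by the classical local-polynomial route: norming sets via Markov's inequality, Bramble--Hilbert plus the embedding $W_2^\tau\hookrightarrow L_\infty$ on a reference cell, rescaling, and summation. Because $\mathcal X^\circ$ is a box, you can use a genuine tiling with disjoint cells rather than the bounded-overlap coverings a general Lipschitz domain forces. This makes the summation step immediate, in particular $\sum_j|g|^2_{W_2^\tau(D_j)}\le|g|^2_{W_2^\tau(\mathcal X^\circ)}$ for the Slobodeckij seminorm. It also yields an explicit $h_0=\min_k(b_k-a_k)/K$ with $K=K(m,d)$.

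The citation buys generality (other domains, $L_q$ targets, derivative seminorms) at no cost in length. Your version buys a self-contained proof with trackable constants for exactly the domain the paper uses.

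Two details should be tidied; neither affects validity.

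First, a box with unequal sides cannot be partitioned into congruent cubes. Split each $[a_k,b_k]$ into $N_k=\lfloor (b_k-a_k)/(Kh_{X_n})\rfloor\ge1$ equal pieces, which gives boxes with sides in $[Kh_{X_n},2Kh_{X_n})$. Then define $p_j$ by pulling back the averaged Taylor polynomial on the unit cube through the diagonal affine map, so the mild anisotropy only affects constants.

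Second, the density sentence in Step~1 is garbled and should read as follows. Any $y'\in\bar D_j$ at distance at least $h_{X_n}$ from every face of $D_j$ has a data point within $h_{X_n}$, and that point lies in $\bar D_j$. Every $y\in\bar D_j$ is within $\sqrt d\,h_{X_n}$ of such a $y'$. Combine this with the coordinatewise Markov bound $\|\nabla p\|_{L_\infty(\bar D_j)}\le 2\sqrt d\,m^2(Kh_{X_n})^{-1}\|p\|_{L_\infty(\bar D_j)}$. This gives your norming inequality on closed cells once $K\ge\max\{2,\,4\sqrt d(\sqrt d+1)m^2\}$.
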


\begin{lemma}[Gagliardo--Nirenberg--Sobolev inequality, Corollary~1 of \citet{brezis2019sobolev}]
\label{LEMMA:GNS_FILL}
Let $\tau>d/2$ and set
$
a = 1-\frac{d}{2\tau} \in(0,1).
$
Then every $g\in W_2^\tau(\mathcal X^\circ)$ satisfies
\begin{equation}
\label{EQ:GNS_FILL}
\|g\|_{L_\infty(\mathcal X^\circ)}
\le
C
\|g\|_{L_2(\mathcal X^\circ)}^a
\|g\|_{W_2^\tau(\mathcal X^\circ)}^{1-a}.
\end{equation}
\end{lemma}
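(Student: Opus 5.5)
The plan is to reduce the statement to $\mathbb R^d$ with a bounded extension operator and then prove the inequality there by splitting frequencies in Fourier space. Since $\mathcal X^\circ$ is a bounded Lipschitz domain (an open hyperrectangle), Stein's universal extension operator $\mathcal E$ (or Rychkov's version for fractional $\tau$) applies. It satisfies $(\mathcal E g)|_{\mathcal X^\circ}=g$, and a single linear operator is bounded simultaneously on $L_2$ and on $W_2^\tau$:
\[
\|\mathcal E g\|_{L_2(\mathbb R^d)}\lesssim \|g\|_{L_2(\mathcal X^\circ)},\qquad \|\mathcal E g\|_{W_2^\tau(\mathbb R^d)}\lesssim \|g\|_{W_2^\tau(\mathcal X^\circ)}.
\]
Simultaneous boundedness is essential, because both norms of the same extension enter the interpolation. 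Since $\|g\|_{L_\infty(\mathcal X^\circ)}\le\|\mathcal E g\|_{L_\infty(\mathbb R^d)}$, it then suffices to prove the inequality for $G:=\mathcal E g$ on $\mathbb R^d$, with $\|G\|_{W_2^\tau}^2\asymp\int(1+|\xi|^2)^\tau|\hat G(\xi)|^2\,d\xi$.

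On $\mathbb R^d$, Fourier inversion gives $\|G\|_{L_\infty}\le(2\pi)^{-d}\int|\hat G|$; the integral is finite because $\tau>d/2$, and this also justifies using the continuous representative. Fix a cutoff $R\ge1$ and split the integral at $|\xi|=R$, applying Cauchy--Schwarz on each piece.
\begin{itemize}
\item On the low frequencies, $\int_{|\xi|\le R}|\hat G|\le |B_R|^{1/2}\|\hat G\|_{L_2}\lesssim R^{d/2}\|G\|_{L_2}$.
\item On the high frequencies,
\[
\int_{|\xi|>R}|\hat G|\le\Bigl(\int_{|\xi|>R}(1+|\xi|^2)^{-\tau}d\xi\Bigr)^{1/2}\|G\|_{W_2^\tau}\lesssim R^{d/2-\tau}\|G\|_{W_2^\tau}.
\]
The weight integral converges precisely because $2\tau>d$.
\end{itemize}
Together these give $\|G\|_{L_\infty}\lesssim R^{d/2}\|G\|_{L_2}+R^{d/2-\tau}\|G\|_{W_2^\tau}$.

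Next, optimize over $R$. Choose $R=(\|G\|_{W_2^\tau}/\|G\|_{L_2})^{1/\tau}$, assuming $G\neq0$ since otherwise there is nothing to prove. This choice satisfies $R\ge1$ automatically, since $\|G\|_{L_2}\le\|G\|_{W_2^\tau}$. Both terms then equal $\|G\|_{L_2}^{1-d/(2\tau)}\|G\|_{W_2^\tau}^{d/(2\tau)}=\|G\|_{L_2}^a\|G\|_{W_2^\tau}^{1-a}$, with $a=1-d/(2\tau)$ as in the statement. Inserting the two extension bounds yields the claimed inequality, with $C$ depending only on $d$, $\tau$ and the box $\mathcal X$.

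I expect the only delicate point to be the extension step. We need one operator that is bounded on both $L_2$ and $W_2^\tau$ for possibly non-integer $\tau$, which is a nontrivial theorem for Lipschitz domains that I would quote rather than prove. For a hyperrectangle one could alternatively build it by hand, using repeated higher-order reflections across the faces together with a cutoff for integer $\tau$, and then real interpolation for fractional $\tau$. The Fourier computation itself is routine.
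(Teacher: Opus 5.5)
Your proof is correct, but it takes a different route from the paper: the paper gives no argument for this lemma and simply imports it as Corollary~1 of Brezis--Mironescu, a general Gagliardo--Nirenberg-type theorem for fractional Sobolev spaces with arbitrary integrability exponents, applied here to the pair $L_2$, $W_2^\tau$ with target $L_\infty$.

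You instead prove the $L_2$-based case directly. A single extension operator, bounded on both $L_2$ and $W_2^\tau$, reduces everything to $\mathbb R^d$. There, the bound $\|G\|_{L_\infty}\lesssim\|\hat G\|_{L_1}$, a Cauchy--Schwarz split at $|\xi|=R$, and the balancing choice $R=(\|G\|_{W_2^\tau}/\|G\|_{L_2})^{1/\tau}$ produce exactly the exponent $a=1-d/(2\tau)$. The computations are right, and you correctly identify that the same operator must control both norms. The remark $R\ge1$ is harmless but unnecessary, since $\int_{|\xi|>R}(1+|\xi|^2)^{-\tau}\,d\xi\le\int_{|\xi|>R}|\xi|^{-2\tau}\,d\xi\lesssim R^{d-2\tau}$ holds for every $R>0$.

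What your route buys is transparency and self-containment. It shows that $\tau>d/2$ enters only through the integrability of the Fourier weight at infinity. It explains $a$ as the result of balancing low against high frequencies. It also replaces a fairly specialised reference by two classical facts: Plancherel, and a Stein, Rychkov, or reflection-type extension for the box.

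What the citation buys is brevity and generality. Plancherel ties your argument to $p=2$, whereas the cited result covers $W_p^s$ scales and their endpoint subtleties. However, only the $p=2$ case is used in the proof of Theorem~\ref{THM:PV_FILL}, so nothing is lost for the paper's purposes.
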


Now we state the posterior variance bound in terms of fill-distance and provide the proof of Theorem \ref{THM:PV_FILL}.
\section{Proof of Theorem \ref{THM:PV_FILL}}
\begin{proof}
Fix $x\in\mathcal X$ and define
$
k_{x}
:=
k(x,\cdot)
\in
\mathcal H_k(\mathcal X).
$
For a coefficient vector
$
\boldsymbol c
:=
(c_1,\ldots,c_n)^\top
\in\mathbb R^n,
$
define
\begin{equation}
\label{EQ:J_DEF_FILL}
J_{x}(\boldsymbol c)
:=
\left\|
k_{x}
-
\sum_{i=1}^n
c_i k(x_i,\cdot)
\right\|_{\mathcal H_k(\mathcal X)}^2
+
\lambda\|\boldsymbol c\|_2^2.
\end{equation}
By the reproducing property,
$
\left\langle
k(u,\cdot),
k(v,\cdot)
\right\rangle_{\mathcal H_k(\mathcal X)}
=
k(u,v),
~
u, v\in\mathcal X.
$
Therefore,
\begin{equation}
\label{EQ:J_FILL}
J_{x}(\boldsymbol c)
=
k(x,x)
-
2\boldsymbol c^\top
 k_n(x)
+
\boldsymbol c^\top
\bigl(
 K_{nn}
+
\lambda I
\bigr)
\boldsymbol c.
\end{equation}
Since
$
K_{nn}
+
\lambda I
$
is positive definite, the unique minimizer of $J_{x}$ is
\begin{equation}
\label{EQ:C_HAT_FILL}
\widehat{\boldsymbol c}_{x}
:=
\bigl(
K_{nn}
+
\lambda I
\bigr)^{-1}
 k_n(x).
\end{equation}
Substituting \eqref{EQ:C_HAT_FILL} into
\eqref{EQ:J_FILL} gives
\begin{equation}
\label{EQ:J_VAR_FILL}
J_{x}
\bigl(
\widehat{\boldsymbol c}_{x}
\bigr)
=
\sigma_n^2(x).
\end{equation}

Define a scalar-valued residual function
\begin{equation}
\label{EQ:R_FILL}
r_{x}(\cdot)
:=
k(x,\cdot)
-
\sum_{i=1}^n
\bigl(
\widehat{\boldsymbol c}_{x}
\bigr)_i
k(x_i,\cdot).
\end{equation}
From \eqref{EQ:J_DEF_FILL},
\eqref{EQ:J_VAR_FILL}, and \eqref{EQ:R_FILL}
\begin{equation}
\label{EQ:R_ENERGY_FILL}
\|r_{x}\|_{\mathcal H_k(\mathcal X)}^2
+
\lambda
\|\widehat{\boldsymbol c}_{x}\|_2^2
=
\sigma_n^2(x).
\end{equation}
Hence
\begin{equation}
\label{EQ:R_BASIC_FILL}
\|r_{x}\|_{\mathcal H_k(\mathcal X)}
\le
\sigma_n(x),
\qquad
\sqrt{\lambda}
\|\widehat{\boldsymbol c}_{x}\|_2
\le
\sigma_n(x).
\end{equation}

Define the residual-evaluation vector
\[
\boldsymbol r_{x,n}
:=
\begin{bmatrix}
r_{x}(x_1)\\
\vdots\\
r_{x}(x_n)
\end{bmatrix}
\in\mathbb R^n.
\]
From \eqref{EQ:R_FILL},
$
\boldsymbol r_{x,n}
=
k_n(x)
-
K_{nn}
\widehat{\boldsymbol c}_{x},
$
and by \eqref{EQ:C_HAT_FILL},
$
\bigl(
K_{nn}
+
\lambda I
\bigr)
\widehat{\boldsymbol c}_{x}
=
k_n(x),
$
which gives us
$
\boldsymbol r_{x,n}
=
\lambda
\widehat{\boldsymbol c}_{x}.
$
Using \eqref{EQ:R_BASIC_FILL},
\begin{equation}
\label{EQ:R_DESIGN_BOUND_FILL}
\|\boldsymbol r_{x,n}\|_2
\le
\sqrt{\lambda}\,
\sigma_n(x).
\end{equation}

Evaluating \eqref{EQ:R_FILL} at the point
$x$ gives
\begin{align}
r_{x}(x)
&=
k(x,x)
-
\widehat{\boldsymbol c}_{x}^{\top}
k_n(x)
\notag\\
&=
k(x,x)
-
k_n(x)^\top
\bigl(
K_{nn}
+
\lambda I
\bigr)^{-1}
k_n(x)
\notag\\
&=
\sigma_n^2(x).
\label{EQ:R_AT_X_FILL}
\end{align}

We now use Assumption
\ref{ASSUMP:KERNEL} and \eqref{EQ:R_BASIC_FILL} to get,
\begin{equation}
\label{EQ:R_SOBOLEV_FILL}
\|r_{x}|_{\mathcal X^\circ}\|
_{W_2^\tau(\mathcal X^\circ)}
\lesssim
\|r_{x}\|_{\mathcal H_k(\mathcal X)}
\lesssim
\sigma_n(x).
\end{equation}

Applying Lemma~\ref{LEMMA:SAMPLING_FILL} to \(r_x|_{\mathcal X^\circ}\), together with \eqref{EQ:R_DESIGN_BOUND_FILL} and \eqref{EQ:R_SOBOLEV_FILL}, yields, whenever \(h_{X_n}\le h_0\),
\begin{align}
\|r_{x}\|_{L_2(\mathcal X^\circ)}
&\lesssim 
h_{X_n}^{\tau}
\|r_{x}\|_{W_2^\tau(\mathcal X^\circ)}
+
h_{X_n}^{d/2}
\|\boldsymbol r_{x,n}\|_2
\notag\\
&\lesssim
\sigma_n(x)
\left(
h_{X_n}^{\tau}
+
\sqrt{\lambda}\,
h_{X_n}^{d/2}
\right).
\label{EQ:R_L2_FILL}
\end{align}
Let
$
a
:=
1-\frac{d}{2\tau}.
$
By Lemma~\ref{LEMMA:GNS_FILL} with \eqref{EQ:R_SOBOLEV_FILL} and \eqref{EQ:R_L2_FILL},
\begin{align}
\|r_{x}\|_{L_\infty(\mathcal X^\circ)}
&\lesssim
\|r_{x}\|_{L_2(\mathcal X^\circ)}^a
\|r_{x}\|_{W_2^\tau(\mathcal X^\circ)}^{1-a}
\notag\\
&\lesssim
\sigma_n(x)
\left(
h_{X_n}^{\tau}
+
\sqrt{\lambda}\,
h_{X_n}^{d/2}
\right)^a.
\label{EQ:R_LINF_FILL}
\end{align}

Since $r_{x}$ is continuous on
$\mathcal X=\overline{\mathcal X^\circ}$ and
$\mathcal X^\circ$ is dense in $\mathcal X$,
$
|r_{x}(x)|
\le
\|r_{x}\|_{L_\infty(\mathcal X^\circ)}.
$
From \eqref{EQ:R_AT_X_FILL} and
\eqref{EQ:R_LINF_FILL}, we get
\begin{equation}
\label{EQ:SIGMA_PRE_FINAL}
\sigma_n^2(x)
\lesssim
\sigma_n(x)
\left(
h_{X_n}^{\tau}
+
\sqrt{\lambda}\,
h_{X_n}^{d/2}
\right)^a.
\end{equation}
If $\sigma_n(x)>0$, dividing
\eqref{EQ:SIGMA_PRE_FINAL} by
$\sigma_n(x)$ and squaring gives
\[
\sigma_n^2(x)
\lesssim
\left(
h_{X_n}^{\tau}
+
\sqrt{\lambda}\,
h_{X_n}^{d/2}
\right)^{2a}.
\]
The same inequality is immediate when
$\sigma_n(x)=0$. Since $(u+v)^p \lesssim u^p + v^p$ for all $u, v \ge 0$ and any fixed $p > 0$, we obtain
\[
\sigma_n^2(\boldsymbol{x}) \lesssim h_{\mathcal{X}_n}^{2a\tau} + \lambda^a h_{\mathcal{X}_n}^{da}.
\]
Finally, since
$
2a\tau=2\tau-d,~
a=1-\frac{d}{2\tau},
$
and hence
\[
\sigma_n^2(x)
\lesssim
h_{X_n}^{2\tau-d}
+
\lambda^{1-\frac{d}{2\tau}}
h_{X_n}^{d\left(1-\frac{d}{2\tau}\right)}.
\]
Taking the supremum over
$x\in\mathcal X$, we get the final statement.
\end{proof}

\noindent The following two lemmas verify that both the Matérn and SE kernels satisfy Assumption \ref{ASSUMP:KERNEL}.

\begin{lemma}[Mat\'ern kernel]
\label{LEM:MATERN_FILL_VARIANCE}
Let
\[
k_\nu:
\mathcal X\times\mathcal X
\to\mathbb R
\]
be a Mat\'ern kernel with
$\nu>0$. Then, for all sufficiently small
$h_{X_n}$,
\begin{equation}
\label{EQ:MATERN_FILL_VARIANCE}
\sup_{x\in\mathcal X}
\sigma_n^2(x)
\le
C
\left\{
h_{X_n}^{2\nu}
+
\lambda^{\frac{2\nu}{2\nu+d}}
h_{X_n}^{\frac{2\nu d}{2\nu+d}}
\right\}.
\end{equation}
\end{lemma}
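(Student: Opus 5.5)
The plan is to obtain Lemma~\ref{LEM:MATERN_FILL_VARIANCE} as a direct specialization of Theorem~\ref{THM:PV_FILL}. The only kernel-specific input needed is that the Mat\'ern kernel satisfies Assumption~\ref{ASSUMP:KERNEL} with $\tau=\nu+d/2$.

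\textbf{Step 1: verify the assumption.} On $\mathbb R^d$, the Mat\'ern kernel $k_\nu$ has Fourier transform satisfying
\[
\widehat{k_\nu}(\omega)\asymp(1+\|\omega\|_2^2)^{-(\nu+d/2)}.
\]
Hence $\mathcal H_{k_\nu}(\mathbb R^d)$ equals $W_2^{\nu+d/2}(\mathbb R^d)$ with equivalent norms; this is the standard fact in \citet{wendland2004scattered}.

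To transfer this to $\mathcal X$, use that $\mathcal H_{k_\nu}(\mathcal X)$ consists of restrictions of elements of $\mathcal H_{k_\nu}(\mathbb R^d)$, with $\|g\|_{\mathcal H_k(\mathcal X)}$ equal to the minimal norm over all extensions. For any extension $\tilde g$ of $g$,
\[
\|g|_{\mathcal X^\circ}\|_{W_2^\tau(\mathcal X^\circ)}\le\|\tilde g\|_{W_2^\tau(\mathbb R^d)}\lesssim\|\tilde g\|_{\mathcal H_{k_\nu}(\mathbb R^d)}.
\]
Taking the infimum over extensions gives the continuous embedding. Since $\nu>0$, we have $\tau=\nu+d/2>d/2$, so the hypothesis $\tau>d/2$ holds.

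\textbf{Step 2: substitute the exponents.} Apply Theorem~\ref{THM:PV_FILL} with $\tau=\nu+d/2$. For $h_{X_n}\le h_0$, the exponents become:
\begin{itemize}
\item $2\tau-d=2\nu$;
\item $1-\frac{d}{2\tau}=1-\frac{d}{2\nu+d}=\frac{2\nu}{2\nu+d}$;
\item $d\bigl(1-\frac{d}{2\tau}\bigr)=\frac{2\nu d}{2\nu+d}$.
\end{itemize}
Substituting these yields exactly \eqref{EQ:MATERN_FILL_VARIANCE}. Here $C$ is the implicit constant from the theorem, which depends on the kernel, $\mathcal X$, and $\nu$, but not on $n$ or $\lambda$. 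The phrase ``for all sufficiently small $h_{X_n}$'' corresponds to the threshold $h_0$ from Lemma~\ref{LEMMA:SAMPLING_FILL}.

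\textbf{Main obstacle.} The only substantive point is Step~1, specifically the passage from the RKHS on $\mathbb R^d$ to the RKHS on the bounded domain and its interior. I would handle this through the restriction/extension characterization of RKHS on subsets, as sketched above, rather than invoking any extension operator. Everything after that is routine algebra.
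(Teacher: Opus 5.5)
Your proposal is correct. Its overall structure matches the paper's: verify Assumption~\ref{ASSUMP:KERNEL} with $\tau=\nu+d/2$, then substitute into Theorem~\ref{THM:PV_FILL}. Your exponent algebra agrees with the paper's. Where you differ is in how you verify the embedding.

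The paper's route is as follows. It restricts the Mat\'ern kernel to the open box $\mathcal X^\circ$. It then cites the two-sided norm equivalence $\mathcal H_{k_{\nu,\mathcal X^\circ}}(\mathcal X^\circ)\cong W_2^{\nu+d/2}(\mathcal X^\circ)$ on the bounded Lipschitz domain (Teckentrup's Proposition~3.3). Finally it uses the RKHS restriction theorem (Paulsen's Theorem~5.7) only for the step $\mathcal X^\circ\subset\mathcal X$.

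Your route applies the same restriction theorem to $\mathcal X\subset\mathbb R^d$ instead. You combine it with the whole-space identity $\mathcal H_{k_\nu}(\mathbb R^d)=W_2^{\nu+d/2}(\mathbb R^d)$. You finish with the elementary fact that restricting to a subdomain does not increase the Sobolev norm. That fact holds with constant one for the Slobodeckij or quotient-norm definitions, and up to a constant if the $\mathbb R^d$ norm is the Fourier one; your $\lesssim$ absorbs that constant.

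What your route buys is that it uses only the direction the lemma actually needs. It therefore avoids the Sobolev extension-operator machinery behind the bounded-domain equivalence, and needs no boundary regularity at this step. Lipschitz regularity of $\mathcal X^\circ$ is still used later, by the sampling and Gagliardo--Nirenberg inequalities. The paper's route is a shorter citation, but it delivers a two-sided equivalence that is stronger than required. In effect, your argument is the easy half of the reasoning underlying the result the paper cites.
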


\begin{proof}
Define the restriction of the Mat\'ern kernel to the open
hyperrectangle by
$
k_{\nu,\mathcal X^\circ}
:= k_\nu|_{\mathcal X^\circ\times\mathcal X^\circ}.
$
By Proposition~3.3 of
\citet{teckentrup2020convergence},
$
\mathcal{H}_{k_{\nu,\mathcal{X}^\circ}}(\mathcal{X}^\circ) \cong W_2^{\nu+d/2}(\mathcal{X}^\circ).
$ 
Hence
\begin{equation}
\label{EQ:MATERN_OPEN_EMBED}
\|u\|_{W_2^{\nu+d/2}(\mathcal X^\circ)}
\lesssim
\|u\|_{
\mathcal H_{k_{\nu,\mathcal X^\circ}}(\mathcal X^\circ)
}.
\end{equation}
Now let
$
g\in\mathcal H_{k_\nu}(\mathcal X).
$
By Theorem~5.7 of
\citet{paulsen2016rkhs}, applied to the inclusion
$\mathcal X^\circ\hookrightarrow\mathcal X$,
\[
\|g|_{\mathcal X^\circ}\|_{
\mathcal H_{k_{\nu,\mathcal X^\circ}}(\mathcal X^\circ)
}
\le
\|g\|_{\mathcal H_{k_\nu}(\mathcal X)}.
\]
Combining this with \eqref{EQ:MATERN_OPEN_EMBED},
\[
\|g|_{\mathcal X^\circ}\|_
{W_2^{\nu+d/2}(\mathcal X^\circ)}
\lesssim
\|g\|_{\mathcal H_{k_\nu}(\mathcal X)}.
\]
Thus Assumption
\ref{ASSUMP:KERNEL} holds with
$
\tau
=
\nu+\frac d2.
$
Theorem~\ref{THM:PV_FILL} therefore gives
\eqref{EQ:MATERN_FILL_VARIANCE}.
\end{proof}

\begin{lemma}[Squared-exponential kernel]
\label{LEM:SE_FILL_VARIANCE}
Let $k=k_{\mathrm{SE}}$ be a squared-exponential kernel with fixed
hyperparameters. Then, for every integer $m>d/2$, there exist constants
$C_m>0$ and $h_m^\star>0$ such that, whenever
$h_{X_n}\le h_m^\star$,
\begin{equation}
\label{EQ:SE_FILL_VARIANCE}
\sup_{x\in\mathcal X}
\sigma_n^2(x)
\le
C_m
\left\{
h_{X_n}^{2m-d}
+
\lambda^{1-\frac{d}{2m}}
h_{X_n}^{d\left(1-\frac{d}{2m}\right)}
\right\}.
\end{equation}
\end{lemma}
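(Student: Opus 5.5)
The plan is to verify Assumption~\ref{ASSUMP:KERNEL} for the SE kernel with $\tau=m$ for each integer $m>d/2$, and then apply Theorem~\ref{THM:PV_FILL}, mirroring the proof of Lemma~\ref{LEM:MATERN_FILL_VARIANCE}. Theorem~\ref{THM:PV_FILL} then gives constants $C_m$ and $h_m^\star$ (playing the role of $h_0$, which depends on $m$ through the sampling inequality in Lemma~\ref{LEMMA:SAMPLING_FILL} and on the embedding constant) such that \eqref{EQ:SE_FILL_VARIANCE} holds whenever $h_{X_n}\le h_m^\star$. Since $\tau=m$ is substituted directly into $h^{2\tau-d}$ and $\lambda^{1-d/(2\tau)}h^{d(1-d/(2\tau))}$, no further computation is needed once the embedding is established.

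For the embedding, I would work on an open set containing $\mathcal X$, or directly on $\mathcal X^\circ$. I would first pass to the restricted kernel $k_{\mathrm{SE}}|_{\mathcal X^\circ\times\mathcal X^\circ}$, using Theorem~5.7 of \citet{paulsen2016rkhs} exactly as in the Matérn case, so that $\|g|_{\mathcal X^\circ}\|_{\mathcal H(\mathcal X^\circ)}\le\|g\|_{\mathcal H_k(\mathcal X)}$. It then suffices to show $\mathcal H_{k_{\mathrm{SE}}}(\mathcal X^\circ)\hookrightarrow W_2^m(\mathcal X^\circ)$ continuously. For this I would invoke the standard fact (Steinwart--Christmann, Corollary~4.36) that for a kernel $k\in C^{2m}$ (here $C^\infty$), every $g\in\mathcal H_k$ is $C^m$ with
\[
\partial^\beta g(x)=\langle g,\partial_2^\beta k(x,\cdot)\rangle_{\mathcal H_k},\qquad |\beta|\le m .
\]
This gives $|\partial^\beta g(x)|\le\|g\|_{\mathcal H_k}\bigl(\partial_1^\beta\partial_2^\beta k(x,x)\bigr)^{1/2}$. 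For the SE kernel the quantity $\partial_1^\beta\partial_2^\beta k(x,x)$ is a finite constant depending only on $\beta$ and $\ell$, independent of $x$. Squaring, integrating over the bounded set $\mathcal X^\circ$ and summing over $|\beta|\le m$ yields $\|g\|_{W_2^m(\mathcal X^\circ)}\le C\,\mathrm{vol}(\mathcal X)^{1/2}\|g\|_{\mathcal H_k}$.

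An alternative route is Fourier-based: the SE spectral density $\propto e^{-\ell^2\|\omega\|^2/2}$ is dominated by $C_m(1+\|\omega\|^2)^{-m}$. This gives $\mathcal H_{k}(\mathbb R^d)\hookrightarrow W_2^m(\mathbb R^d)$, and restricting to $\mathcal X^\circ$ finishes the argument. I would use this only if the derivative-reproducing argument needs more care.

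The main obstacle is minor: keeping track of the fact that the constants $C_m$ and $h_m^\star$ depend on $m$ and blow up as $m$ grows. This is why the statement is given for each fixed integer $m$ rather than uniformly in $m$. The smoothness step itself is routine.
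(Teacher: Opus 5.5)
Your proposal is correct and follows essentially the paper's route: restrict to $\mathcal X^\circ$ via Theorem~5.7 of Paulsen--Raghupathi, embed $\mathcal H_{k_{\mathrm{SE}}}(\mathcal X^\circ)$ continuously into $W_2^m(\mathcal X^\circ)$ to obtain Assumption~\ref{ASSUMP:KERNEL} with $\tau=m$, then apply Theorem~\ref{THM:PV_FILL}. The only difference is the embedding step: the paper cites Theorem~4.48 of Steinwart--Christmann, whereas you derive it from the derivative-reproducing bound (their Corollary~4.36), translation invariance, and boundedness of $\mathcal X^\circ$ (or from Fourier decay). Either derivation is a valid self-contained justification.
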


\begin{proof}
Define the restriction of the squared-exponential kernel to the open
hyperrectangle by
$
k_{\mathrm{SE},\mathcal X^\circ}
:=
k_{\mathrm{SE}}|_{\mathcal X^\circ\times\mathcal X^\circ}.
$
By Theorem~4.48 of
\citet{steinwart2008svm}, for every integer $m\ge1$,
\begin{equation}
\label{EQ:SE_OPEN_EMBED}
\|u\|_{W_2^m(\mathcal X^\circ)}
\lesssim
\|u\|_{
\mathcal H_{k_{\mathrm{SE},\mathcal X^\circ}}
(\mathcal X^\circ)
},
\end{equation}
For
$
g\in\mathcal H_{k_{\mathrm{SE}}}(\mathcal X),
$
Theorem~5.7 of
\citet{paulsen2016rkhs}, applied to the inclusion
$\mathcal X^\circ\hookrightarrow\mathcal X$, gives
\[
\|g|_{\mathcal X^\circ}\|_{
\mathcal H_{k_{\mathrm{SE},\mathcal X^\circ}}
(\mathcal X^\circ)
}
\le
\|g\|_{\mathcal H_{k_{\mathrm{SE}}}(\mathcal X)}.
\]
Combining this with \eqref{EQ:SE_OPEN_EMBED},
\[
\|g|_{\mathcal X^\circ}\|_{W_2^m(\mathcal X^\circ)}
\lesssim
\|g\|_{\mathcal H_{k_{\mathrm{SE}}}(\mathcal X)}.
\]
Thus Assumption
\ref{ASSUMP:KERNEL} holds with
$
\tau=m.
$
Theorem~\ref{THM:PV_FILL} therefore gives
\eqref{EQ:SE_FILL_VARIANCE}.
\end{proof}

\section{Proof of Proposition \ref{PROP:FILL_GREEDY}}
\begin{proof}
Consider a realization with $M_T\ge2$. Let
$t_1<\cdots<t_{M_T}\le T$ denote all exploration rounds among the
first $T$ rounds, and write $z_j:=x_{t_j}$ for their query locations.
For every $i<j$, the greedy packing rule gives
\[
\|z_j-z_i\|_2
\ge
\min_{1\le s<t_j}\|z_j-x_s\|_2
=
h_{X_{t_j-1}}
\ge h_{X_T},
\]
where the final inequality follows from $X_{t_j-1}\subseteq X_T$. Thus the separation radius of
$E_T:=\{z_1,\ldots,z_{M_T}\}$ satisfies
\[
q(E_T):=\frac12\min_{i\ne j}\|z_i-z_j\|_2
\ge \frac{h_{X_T}}{2}.
\]
By \citet[Lemma~2.1]{pronzato2023quasi} and compactness of
$\mathcal{X}$, there is a constant $C>0$, depending only on
$\mathcal{X}$ and $d$, such that
\begin{equation}\label{ineq:fill_dist_prelim}
h_{X_T}\le2q(E_T)\le C M_T^{-1/d}
\qquad
\text{whenever }M_T\ge2.
\end{equation}
It remains to relate $M_T$ to its deterministic mean $e_T$.
Since $M_T$ is a sum of independent Bernoulli variables,
\[
\operatorname{Var}(M_T)
=
\sum_{t=1}^T p_t(1-p_t)
\le e_T.
\]
Fix any $\eta\in(0,1)$. The Chebyshev inequality implies
\[
\mathbb{P}\bigl(M_T\ge(1-\eta)e_T\bigr)
\ge1-\frac{1}{\eta^2e_T}.
\]
Since $e_T\to\infty$, we have $(1-\eta)e_T\ge2$ for all sufficiently
large $T$. On the event $\{M_T\ge(1-\eta)e_T\}$, the 
bound \eqref{ineq:fill_dist_prelim} therefore gives
\[
h_{X_T}
\le C M_T^{-1/d}
\le C(1-\eta)^{-1/d}e_T^{-1/d}.
\]
Consequently,
\[
\mathbb{P}\left(
h_{X_T}\lesssim C(1-\eta)^{-1/d}e_T^{-1/d}
\right)
\ge1-\frac{1}{\eta^2e_T}.
\]
For every \(\delta\in(0,1)\), after enlarging the implicit constant, if necessary, to cover all finite values of \(T\) and absorbing the fixed factor \((1-\eta)^{-1/d}\), we obtain
\[
\mathbb{P}\left(
h_{X_T}\lesssim e_T^{-1/d}
\right)\ge1-\delta.
\]
\end{proof}

\section{Proof of Proposition \ref{PROP:FILL_RANDOM}}

\begin{proof}
Let $\{Z_j\}_{j\ge1}$ be the independent exploration draws,
and define
\[
H_m:=h_{\{Z_1,\ldots,Z_m\}}.
\]
The fill-distance estimate from the proof of
\citet[Lemma~2]{oates2019convergence} gives, for all sufficiently
large $m$,
\[
\mathbb{E}[H_m]
\lesssim
\left(\frac{\log m}{m}\right)^{1/d}.
\]

Recall that $M_T$ is the number of exploration rounds by time $T$.
On $\{M_T=m\}$ with $m\ge1$, the first $m$ exploration draws belong to $X_T$. Since adding points cannot increase the fill distance, $h_{X_T}\le H_m$ on this event. Moreover, $M_T$ is independent of the exploration queries. Consequently,
\[
\mathbb{E}[h_{X_T}\mid M_T=m]
\le \mathbb{E}[H_m\mid M_T=m]
=\mathbb{E}[H_m].
\]

It remains to relate $M_T$ to its deterministic mean $e_T$.
Since $M_T$ is a sum of independent Bernoulli variables,
\[
\operatorname{Var}(M_T)
=\sum_{t=1}^T p_t(1-p_t)
\le e_T.
\]
Fix any $\eta\in(0,1)$. Chebyshev's inequality gives
\[
\mathbb{P}\bigl(M_T<(1-\eta)e_T\bigr)
\le\frac{1}{\eta^2e_T}.
\]
For sufficiently large $T$, every integer
$m\ge(1-\eta)e_T$ is large enough for the expectation bound above to apply. Since $u\mapsto(\log u)/u$ is decreasing for
$u\ge e$, for such counts,
\[
\mathbb{E}[h_{X_T}\mid M_T=m]
\lesssim
\left(\frac{\log e_T}{e_T}\right)^{1/d}.
\]
The above holds for all $T$ by enlarging the implicit constant. For the remaining counts, use
$h_{X_T}\le D:=\operatorname{diam}(\mathcal{X})$.
Averaging over $M_T$ therefore yields
\[
\mathbb{E}[h_{X_T}]
\lesssim
\left(\frac{\log e_T}{e_T}\right)^{1/d}
+\frac{D}{\eta^2e_T}
\lesssim
\left(\frac{\log e_T}{e_T}\right)^{1/d}.
\]

Finally, Markov's inequality implies that, with probability
at least $1-\delta$,
\[
h_{X_T}
\le \frac{\mathbb{E}[h_{X_T}]}{\delta}
\lesssim
\left(\frac{\log e_T}{e_T}\right)^{1/d}
=\tilde{\mathcal{O}}\bigl(e_T^{-1/d}\bigr).
\]
\end{proof}

\section{Proof of Corollary \ref{COR:PV_RATES}}
\begin{proof}
The noise-dependent term in
Theorem~\ref{THM:PV_FILL} dominates for sufficiently small
fill distance, since
$
2\tau-d>d\left(1-\frac{d}{2\tau}\right).
$
Consequently,
\[
\sup_{x\in\mathcal{X}}\sigma_T^2(x)
\lesssim h_{X_T}^{d(1-\frac{d}{2\tau})}.
\]
The fill-distance bounds ensure that the condition $h_{X_T}\le h_0$ holds on the respective events of probability at least $1-\delta$ for sufficiently large $T$. For the Mat\'ern kernel, take $\tau=\nu+d/2$. Then
\[
\sup_{x\in\mathcal{X}}\sigma_T^2(x)
\lesssim h_{X_T}^{\frac{2\nu d}{2\nu+d}}.
\]
Applying Propositions~\ref{PROP:FILL_GREEDY}
and~\ref{PROP:FILL_RANDOM} gives the stated Mat\'ern rates.

For the SE kernel, Assumption~\ref{ASSUMP:KERNEL} holds
for every integer $\tau>d/2$. Given $\varepsilon\in(0,1)$,
choose $\tau$ sufficiently large that $d/(2\tau)<\varepsilon$.
For either exploration strategy, the fill-distance bound gives
\[
\sup_{x\in\mathcal{X}}\sigma_T^2(x)
=
\tilde{\mathcal{O}}\left(
e_T^{-(1-\frac{d}{2\tau})}
\right).
\]
Since $1-d/(2\tau)>1-\varepsilon$, the logarithmic factors
are absorbed by this strict difference in exponents.
Thus the bound is
$\mathcal{O}(e_T^{-(1-\varepsilon)})$. Finally, the implicit constants may be enlarged, if necessary, to cover the finitely many values of $T$ below the threshold
for which the preceding asymptotic bounds apply.
\end{proof}

\section{Proof for inference regret bound}
\begin{proposition}[Sequential confidence bound]
\label{PROP:FLYNN_REEB}
Suppose that $f\in\mathcal H_k(\mathcal X)$ with $\|f\|_{\mathcal H_k}\le B$, and that the observation noise is i.i.d. and $R$-sub-Gaussian. Let $\lambda>0$ and define
\[
\gamma_t(\lambda)
:=
\max_{x_1,\ldots,x_t\in\mathcal X}
\frac{1}{2}
\log\det\left(I+\lambda^{-1}K_t\right).
\]
Then, for every $\delta\in(0,1)$, with probability at least $1-\delta$,
simultaneously for all $t\ge1$ and all $x\in\mathcal X$,
\begin{equation}
\label{EQN:FLYNN_REEB}
|f(x)-\mu_t(x)|
\le
\left[
B^2+
\frac{2R^2}{\lambda}
\left(
\gamma_t(\lambda)+\log\frac{1}{\delta}
\right)
\right]^{1/2}
\sigma_t(x).
\end{equation}
\end{proposition}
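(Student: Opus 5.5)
Our plan is to prove Proposition~\ref{PROP:FLYNN_REEB} by the standard self-normalized martingale argument for kernel ridge regression, working in the feature space of $\mathcal H_k$.

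\textbf{Step 1: decomposition.} Write $f(x)=\langle f,k(x,\cdot)\rangle_{\mathcal H_k}$ and let $\Phi_t$ be the operator $g\mapsto(g(x_1),\ldots,g(x_t))^\top$. Set $V_t:=\Phi_t^*\Phi_t+\lambda I$ on $\mathcal H_k$. The push-through identity gives $\mu_t(x)=\langle k(x,\cdot),V_t^{-1}\Phi_t^*Y_t\rangle$ and $\sigma_t^2(x)=\lambda\|k(x,\cdot)\|_{V_t^{-1}}^2$. Substituting $Y_t=\Phi_t f+\boldsymbol\epsilon_t$ yields
\[
f(x)-\mu_t(x)=\lambda\langle k(x,\cdot),V_t^{-1}f\rangle-\langle k(x,\cdot),V_t^{-1}\Phi_t^*\boldsymbol\epsilon_t\rangle .
\]
By Cauchy--Schwarz in the $V_t^{-1}$ geometry, the bias term is at most $\sqrt\lambda\,\|f\|_{\mathcal H_k}\,\|k(x,\cdot)\|_{V_t^{-1}}\le B\sigma_t(x)$. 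The noise term is at most $\lambda^{-1/2}\sigma_t(x)\,\|\Phi_t^*\boldsymbol\epsilon_t\|_{V_t^{-1}}$.

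\textbf{Step 2: self-normalized bound.} Since $x_s$ is chosen before $\epsilon_s$ is observed, the exploration coins $E_s$ and random exploration draws can be placed in the filtration, so $\epsilon_s$ is conditionally $R$-sub-Gaussian. The method of mixtures of Abbasi-Yadkori et al., in its kernelized form due to Chowdhury--Gopalan, then gives, uniformly in $t$ with probability at least $1-\delta$,
\[
\|\Phi_t^*\boldsymbol\epsilon_t\|_{V_t^{-1}}^2\le 2R^2\left(\tfrac12\log\det(I+\lambda^{-1}K_t)+\log\tfrac1\delta\right).
\]
Bounding the log-determinant by $\gamma_t(\lambda)$ yields a bound of the form $B\sigma_t+R\sqrt{2(\gamma_t+\log(1/\delta))/\lambda}\,\sigma_t$.

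\textbf{Step 3: the main obstacle.} The bound in Step 2 is additive, $(a+b)\sigma_t$, whereas \eqref{EQN:FLYNN_REEB} asserts the tighter $\sqrt{a^2+b^2}\,\sigma_t$. To obtain it, we would follow Flynn--Reeb. The idea is to treat the bias and noise jointly, using a Gaussian mixture martingale whose prior is centred so that the quantity $\|\Phi_t^*\boldsymbol\epsilon_t-\lambda f\|_{V_t^{-1}}^2$ is controlled directly. Concretely:
\begin{itemize}
\item Bound the supermartingale $\exp\left(\langle g,\Phi_t^*\boldsymbol\epsilon_t\rangle/R^2-\|\Phi_t g\|^2/(2R^2)\right)$ integrated against the Gaussian prior $\mathcal N(0,\lambda^{-1}R^2 I)$. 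Work with finite-dimensional projections, or use a Gaussian process prior, to avoid measure issues in infinite dimensions.
\item Apply Ville's inequality to the mixture.
\item Complete the square, which yields $\|f-\hat f_t\|_{V_t}^2\le \lambda B^2+2R^2(\gamma_t+\log(1/\delta))$ for the ridge estimate $\hat f_t$.
\item Apply $|f(x)-\mu_t(x)|\le\|f-\hat f_t\|_{V_t}\,\|k(x,\cdot)\|_{V_t^{-1}}$ and divide by $\sqrt\lambda$ to obtain exactly the stated form.
\end{itemize}
The uniformity over $t$ comes for free from Ville's inequality.
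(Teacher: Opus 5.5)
Your overall route matches the paper's. The paper does not re-derive the bound. It invokes Flynn--Reeb's Corollary~4.3 with regularization $\lambda$ and GP covariance scale $c=R^2/\lambda$, converts the resulting confidence set into $|f(x)-\mu_t(x)|\le\lambda^{-1/2}\tilde R_{\lambda,t}\,\sigma_t(x)$ with $\tilde R_{\lambda,t}^2=\lambda B^2+R^2\log\det(I+\lambda^{-1}K_t)+2R^2\log(1/\delta)$, and finishes with $\log\det(I+\lambda^{-1}K_t)\le 2\gamma_t(\lambda)$. Your Steps 1--2 are correct but, as you note, give only the additive form.

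The gap is in Step 3 as you actually wrote it. Integrate $\exp(\langle g,\Phi_t^*\boldsymbol\epsilon_t\rangle/R^2-\|\Phi_t g\|^2/(2R^2))$ against $\mathcal N(0,\lambda^{-1}R^2 I)$. This is exactly the Abbasi-Yadkori/Chowdhury--Gopalan mixture, and it evaluates to $\det(I+\lambda^{-1}K_t)^{-1/2}\exp(\|\Phi_t^*\boldsymbol\epsilon_t\|_{V_t^{-1}}^2/(2R^2))$. The function $f$ never enters this expression. Completing the square therefore controls only $\|\Phi_t^*\boldsymbol\epsilon_t\|_{V_t^{-1}}$, and you are back to $\|f-\hat f_t\|_{V_t}=\|\lambda f-\Phi_t^*\boldsymbol\epsilon_t\|_{V_t^{-1}}\le\sqrt\lambda B+\|\Phi_t^*\boldsymbol\epsilon_t\|_{V_t^{-1}}$. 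So your third bullet's inequality does not follow from your first two bullets. The ``centring'' you allude to is never specified, and the prior you do specify is centred at the wrong place.

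The missing idea is to centre the mixing law of your $g$ at $-f$. Equivalently, mix over the candidate function itself. For each fixed $g$, the quantity
\[
M_t(g):=\exp\Bigl(\tfrac{1}{2R^2}\bigl[\|Y_t-\Phi_t f\|_2^2-\|Y_t-\Phi_t g\|_2^2\bigr]\Bigr)=\exp\Bigl(\sum_{s\le t}\bigl[(g-f)(x_s)\epsilon_s/R^2-(g-f)(x_s)^2/(2R^2)\bigr]\Bigr)
\]
is a nonnegative supermartingale. Mix it over an independent sample path $g\sim\mathcal{GP}(0,(R^2/\lambda)k)$; Tonelli preserves the supermartingale property under adaptive queries, and only the marginals at $x_1,\dots,x_t$ matter. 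The result is the Gaussian marginal likelihood
\[
\bar M_t=\det(I+\lambda^{-1}K_t)^{-1/2}\exp\Bigl(\tfrac{1}{2R^2}\bigl[\|Y_t-\Phi_t f\|_2^2-\lambda Y_t^\top(K_t+\lambda I)^{-1}Y_t\bigr]\Bigr).
\]
Now apply the ridge identity $\|Y_t-\Phi_t g\|_2^2+\lambda\|g\|_{\mathcal H_k}^2=\lambda Y_t^\top(K_t+\lambda I)^{-1}Y_t+\|g-\hat f_t\|_{V_t}^2$ at $g=f$. The exponent becomes $(\|f-\hat f_t\|_{V_t}^2-\lambda\|f\|_{\mathcal H_k}^2)/(2R^2)$. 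The term $-\lambda\|f\|_{\mathcal H_k}^2$ is precisely what places $\lambda B^2$ inside the square root instead of generating the cross term $2\sqrt\lambda B\,\|\Phi_t^*\boldsymbol\epsilon_t\|_{V_t^{-1}}$. Ville's inequality then gives, uniformly in $t$, $\|f-\hat f_t\|_{V_t}^2\le\tilde R_{\lambda,t}^2$. From there your last bullet and the bound $\log\det(I+\lambda^{-1}K_t)\le 2\gamma_t(\lambda)$ complete the proof.
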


\begin{proof}
Apply Corollary~4.3 of \citet{flynn2025tighter} with their
regularization parameter $\alpha=\lambda$ and covariance-scale parameter
$c=R^2/\lambda$. Their confidence sequence holds simultaneously for all
$t\ge1$ and remains valid when each query $x_t$ depends on the previous
observations. By Lemma~B.3 and C.2 therein, the resulting confidence
set implies
\[
|f(x)-\mu_t(x)|
\le
\frac{\tilde R_{\lambda,t}}{\sqrt{\lambda}}\sigma_t(x), \qquad \tilde R_{\lambda,t}^{\,2}
=
R^2\log\det\left(I+\lambda^{-1}K_t\right)
+\lambda B^2
+2R^2\log\frac{1}{\delta}.
\]
simultaneously for all $t\ge1$ and $x\in\mathcal X$. Since
$
\log\det\left(I+\lambda^{-1}K_t\right)
\le 2\gamma_t(\lambda),
$
the result follows.
\end{proof}
\subsection{Proof of Theorem \ref{THM:INFERENCE_GENERAL}}
\begin{proof}
Since $\widehat{x}_T:=\arg\max_{x\in\mathcal X}\mu_T(x)$, we have
$\mu_T(x^\star)\le\mu_T(\widehat{x}_T)$, and hence
\begin{align}
I_T
&=
f(x^\star)-f(\widehat{x}_T) \nonumber\\
&\le
|f(x^\star)-\mu_T(x^\star)|
+
|\mu_T(\widehat{x}_T)-f(\widehat{x}_T)| \nonumber\\
&\le
2\sup_{x\in\mathcal X}|f(x)-\mu_T(x)|.
\label{EQN:INF_BASIC}
\end{align}
By Proposition~\ref{PROP:FLYNN_REEB}, with probability at least
$1-\delta/2$,
\begin{equation}
\label{EQN:INF_CONF}
\sup_{x\in\mathcal X}|f(x)-\mu_T(x)|
\le
\left[
B^2+
\frac{2R^2}{\lambda}
\left(
\gamma_T(\lambda)+\log\frac{2}{\delta}
\right)
\right]^{1/2}
\sup_{x\in\mathcal X}\sigma_T(x).
\end{equation}
Moreover, Corollary~\ref{COR:PV_RATES} implies that, with probability at least $1-\delta/2$,
\[
\sup_{x\in\mathcal X}\sigma_T(x)
=
\begin{cases}
\displaystyle
\tilde{\mathcal O}\left(
e_T^{-\frac{\nu}{2\nu+d}}
\right),
& \text{Mat\'ern},\\
\displaystyle
\mathcal O\left(
e_T^{-\frac{1-\varepsilon}{2}}
\right),
& \text{SE},
\end{cases}
\]
for any $\varepsilon\in(0,1)$. By a union bound, the confidence bound
\eqref{EQN:INF_CONF} and the above posterior variance bound therefore hold
simultaneously with probability at least $1-\delta$. For a given nugget $\lambda>0$, the maximum information gain bounds \citep{vakili2021information} satisfy
\[
\gamma_T(\lambda)
=
\begin{cases}
\displaystyle
\tilde{\mathcal O}\left(
T^{\frac{d}{2\nu+d}}
\right),
& \text{Mat\'ern},\\
\displaystyle
\mathcal O\left((\log T)^{d+1}\right),
& \text{SE}.
\end{cases}
\]
Substituting these rates and the posterior variance bounds into \eqref{EQN:INF_CONF} yields, with probability at least $1-\delta$,
\[
I_T
=
\begin{cases}
\displaystyle
\tilde{\mathcal O}\left(
T^{\frac{d}{2(2\nu+d)}}
e_T^{-\frac{\nu}{2\nu+d}}
\right),
& \text{Mat\'ern},\\
\displaystyle
\tilde{\mathcal O}\left(
e_T^{-\frac{1-\varepsilon}{2}}
\right),
& \text{SE}.
\end{cases}
\]
\end{proof}

\section{Additional Experiments}
\label{APPEND:ADD_EXP}
\makeatletter
\@ifundefined{FloatBarrier}{%
  \def\app@fb@botlist{\@botlist}%
  \def\app@fb@topbarrier{\suppressfloats[t]}%
  \def\FloatBarrier{\par\begingroup\let\@elt\relax
    \edef\@tempa{\app@fb@botlist\@deferlist\@dbldeferlist}%
    \ifx\@tempa\@empty
    \else
      \ifx\@fltovf\relax
        \if@firstcolumn\clearpage
        \else\null\newpage\FloatBarrier\fi
      \else
        \newpage\let\@fltovf\relax\FloatBarrier
      \fi
    \fi
    \endgroup\app@fb@topbarrier}%
}{}
\makeatother
This appendix provides complete experimental details and additional results for Section~\ref{sec:experiments}. We first describe the experimental settings used throughout and present additional results for the benchmark test function examples. We then report further results for the two YAHPO tasks and the Lunar Lander controller-tuning problem. We conclude with a comparison of the computational efficiency of BO with uniform random exploration, MVR-aided BO, and standard BO on two-dimensional benchmarks.

\subsection{Experimental details}
\label{app:exp_protocol}

\paragraph{Experimental design.} Table~\ref{tab:app_protocol} summarizes the evaluation budgets, initial designs, and numbers of independent runs. For each synthetic benchmark function, an experimental run begins with two scrambled-Sobol observations, followed by 400 BO evaluations; accordingly, the horizontal axes in Figures~\ref{fig:app_synthetic_tuned} and~\ref{fig:app_synthetic_uniform_alpha_half} correspond to these 400 BO evaluations. For each objective--acquisition pair in the six-objective benchmark study, we consider the random-exploration schedule
$
p_t=\min\{1,\log(t+1)/(t+1)^\alpha\},
$
with \(\alpha\in\{1/4,1/3,1/2,2/3,3/4\}\), where exploration points are sampled uniformly over the search space. We conduct 20 independent runs per setting.

For the YAHPO and Lunar Lander studies, the reported budgets include the initial designs. In both applications, we fix \(\alpha=1/2\) and likewise sample exploration points uniformly over the corresponding search space. In addition to uniform random sampling, both applications also evaluate approximate greedy packing as an alternative exploration rule. We conduct 30 independent runs for each application setting.

Finally, to assess the computational cost of different exploration strategies, we compare standard BO with two exploration-aided variants: uniform-exploration BO, in which exploration points are drawn uniformly at random over the search space, and MVR-exploration BO, in which exploration points are selected using the maximum variance reduction criterion. We use GP-UCB as the acquisition function, set \(\alpha=1/2\) for both exploration strategies, and conduct 15 independent runs per setting on a shifted two-dimensional Ackley function and a shifted and rotated two-dimensional Rastrigin function. 
We report both final inference regret and runtime; further details are provided in Section~\ref{app:exp_lowdim_mvr}.

\begin{table}[!htb]
\setlength{\belowcaptionskip}{5pt}
\centering\small
\caption{Evaluation budgets, initial designs, and numbers of runs.} 
\label{tab:app_protocol}
\begin{tabular}{@{}lrrrr@{}}
\toprule
Study & Dimension & Budget & Initial & Number of Runs\\
\midrule
Benchmark function comparison & 10, 20, 30 & 400 BO & 2 & 20\\
Uniform--MVR comparison & 2 & 400 BO & 2 & 15\\
YAHPO car / sylvine & 14 / 28 & 120 / 200 total & 10 & 30\\
Lunar Lander & 12 & 400 total & 24 & 30\\
\bottomrule
\end{tabular}
\end{table}

\paragraph{Exploration rules.}
Standard BO (denoted Standard in figures and tables) follows the base acquisition rule at every round. Uniform random exploration (denoted Uniform) instead samples a point uniformly from the search domain on exploration rounds and otherwise follows the base acquisition rule. The hyperparameter optimization experiments additionally consider approximate greedy packing, which at the $t^{\text{th}}$ round selects the candidate farthest from the previously evaluated inputs over a nested scrambled Sobol set of size $256+16t$. For each task and acquisition function, all methods are evaluated using the same set of random seeds. Runs sharing a seed use the same initial design and, when observation noise is present, the same noise realizations. When exploration rules are compared under the same schedule, a shared Bernoulli draw with probability $p_t$ determines whether exploration occurs at each round. On non-exploration rounds, each method follows the base acquisition rule. 

\paragraph{Model and acquisition functions.} 
The synthetic and YAHPO experiments use Mat\'ern-($3/2$) GPs with unit signal variance. Before fitting the GP, we center the observed objective values by subtracting their sample mean and add the mean back when computing posterior predictions. Lengthscales are refitted every five evaluations and constrained to $[0.01,20]$. We use multi-start optimization with 20 starting points for both acquisition function optimization and posterior mean optimization. For each recommendation, we compare the posterior mean optimizer with all previously evaluated inputs and select the point with the largest posterior mean.
For the synthetic benchmark experiments, both the observation-noise variance and the GP nugget are $0.01$. YAHPO uses no additional observation noise, but we include a small nugget of $10^{-4}$. GP-UCB uses an exploration coefficient of two, corresponding to a factor of two multiplying the posterior standard deviation in the UCB acquisition function. At $t^{\text{th}}$ round, GP-TS draws a fresh Sobol candidate set of size \(\operatorname{clip}(\lceil64\sqrt{t}\rceil,128,512)\). It then samples from the GP posterior over this set, with the covariance inflated by a factor of \(\log(t+1)\), and selects the candidate with the largest sampled value.

\paragraph{Computing infrastructure.} All synthetic, YAHPO, and Lunar Lander experiments were run on an internal Linux x86\_64 server with 128 logical CPUs and without GPU acceleration. Independent runs were executed in parallel, while each BO run used a single process. The runtime comparison between uniform random exploration and MVR-based BO was conducted on the same server under a common experimental configuration. 

\paragraph{Assets and licenses.} We use YAHPO Gym v1.0.2 and its accompanying benchmark data, both distributed under the Apache License 2.0. The Lunar Lander environment is provided by Gymnasium under the MIT License and uses the Box2D physics engine, distributed under the zlib License. The 12-parameter controller follows the example distributed with TuRBO, whose reference implementation is released under the Uber Non-Commercial License.

\paragraph{Performance measure and uncertainty.}
Inference regret in the benchmark function experiments is evaluated at the maximizer of the current posterior mean. It therefore measures the quality of the recommendation at that evaluation step, rather than the best value observed so far, and its trajectory need not be monotone. Trajectory figures report either one standard deviation or 1.96 times the standard error, as stated in their captions. When comparing two methods, we compute paired differences within each random seed and bootstrap these differences to construct pointwise 95$\%$ confidence intervals.

\subsection{Benchmark function experiments}
\label{app:exp_synthetic}
The benchmark function experiments consider six objectives: 10-dimensional Trid, 20-dimensional Rosenbrock, and 30-dimensional shifted Bent Cigar, shifted Sphere, shifted Griewank, and normalized Sum of Different Powers. Together, these functions span a range of optimization challenges, including curved valleys, ill-conditioned directions, separable structure, and multimodality. All inputs are represented on the unit hypercube and then mapped to their respective benchmark domains.

\paragraph{Benchmark sources.}
The base definitions of Trid, Rosenbrock, Sphere, Griewank, and Sum of Different Powers are documented in the Virtual Library of Simulation Experiments \citep{simulationlib}. Trid 10, Rosenbrock, Sphere, and Griewank are also covered by \citet{jamil2013literature}; the original Rosenbrock function was introduced by \citet{rosenbrock1960automatic}. For Sum of Different Powers, we use the base form $\sum_{i=1}^{d}|x_i|^{i+1}$ described by \citet{molga2005test}, with output normalization. The base Bent Cigar function is documented by \citet{Al-Roomi2015}.

Our use of transformed benchmark functions follows established practice in numerical optimization. For example, CEC 2005 includes shifted Sphere and shifted, rotated Griewank \citep{suganthan2005problem}, while shifted and rotated Bent Cigar is included in CEC 2017 \citep{awad2016cec2017} and evaluated in 30 dimensions by \citet{ye2024multistrategy}.

\paragraph{Calibrated exploration schedule.} 
The exponent $\alpha$ controls the frequency of uniform random exploration through the schedule $p_t=\min\{1,\log(t+1)/(t+1)^\alpha\}$, with smaller values corresponding to more frequent exploration. For each objective--acquisition pair, we compare $\alpha\in\{1/4,1/3,1/2,2/3,3/4\}$ and select the value that minimizes the final inference regret averaged over 20 random seeds; we denote this selected value by $\alpha^*$. Figure~\ref{fig:app_synthetic_tuned} shows the resulting trajectories, and Table~\ref{tab:app_synthetic_endpoints} reports the corresponding final inference regret values. BO with uniform exploration achieves lower mean final inference regret than standard BO in 17 of the 18 scenarios, with the paired 95$\%$ bootstrap confidence interval excluding zero in 16 cases. 

\paragraph{Common exploration schedule.}We next fix $\alpha=1/2$ across all 18 objective--acquisition pairs, rather than selecting the exploration exponent separately for each setting. Figure~\ref{fig:app_synthetic_uniform_alpha_half} shows the corresponding inference regret trajectories, and Table~\ref{tab:app_synthetic_alpha_half} reports the final values. Even under this common schedule, incorporating uniform random exploration remains beneficial: BO with uniform random exploration achieves lower mean final inference regret than standard BO in 16 of the 18 scenarios, with statistically significant improvements in 11 cases based on the paired 95$\%$ bootstrap confidence intervals and no statistically significant losses. These results indicate that the benefit of explicit exploration persists even without careful calibration of the exploration parameter. 

\paragraph{Exploration parameter sensitivity.}
Figure~\ref{fig:exp_synthetic_alpha_sweep} in the main body examines the sensitivity of final inference regret to the exploration exponent \(\alpha\), with smaller values corresponding to more frequent uniform random exploration. Over 400 BO rounds, the expected numbers of exploration queries are 399.4, 364.2, 164.0, 75.2, and 51.6 for \(\alpha=1/4,1/3,1/2,2/3,3/4\), respectively. Among the five values considered, \(\alpha=1/4\) achieves the lowest mean final inference regret in 16 of the 18 objective--acquisition pairs. The two exceptions are GP-TS on Shifted Bent Cigar 30D and Sum of Different Powers 30D, for which \(\alpha=1/3\) and \(\alpha=3/4\), respectively, attain the lowest means. Overall, the results favor more aggressive exploration across most benchmark settings, while also indicating that the best exploration frequency can depend on the objective and acquisition rule.

\begin{figure*}[htp]
\centering
\includegraphics[width=\textwidth]{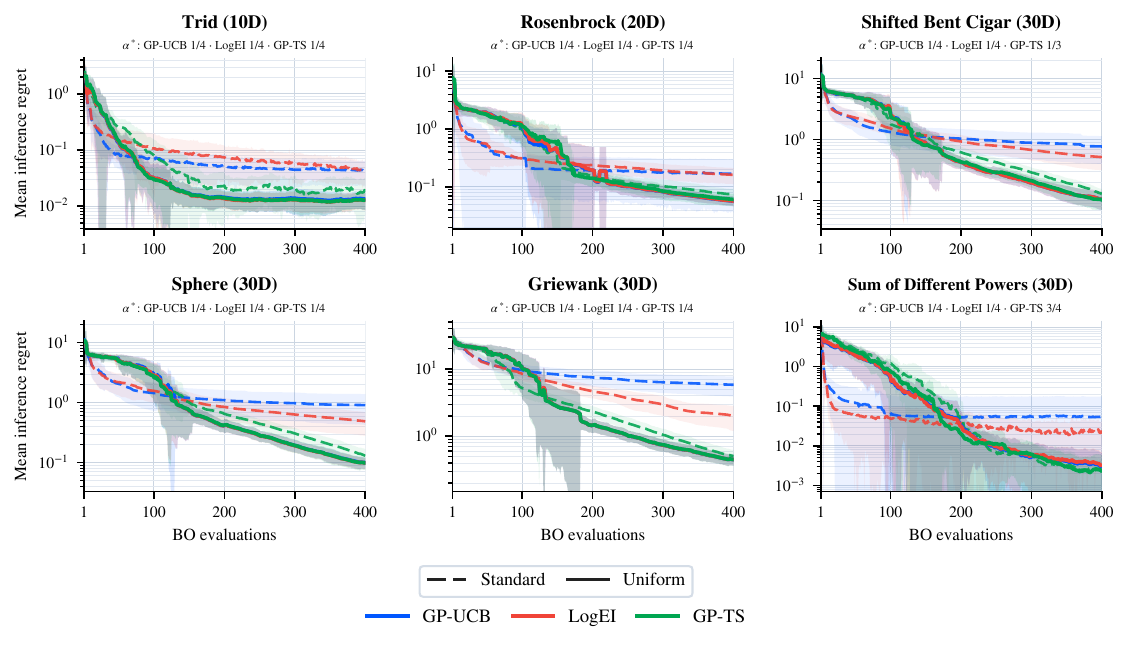}
\caption{\textbf{Benchmark function inference regret trajectories under calibrated exploration.}
Standard BO (Standard) and BO with uniform random exploration (Uniform) are compared using GP-UCB, LogEI, and GP-TS across six benchmark functions. For each objective--acquisition pair, \(\alpha^*\) is selected from the five-value grid: $\{1/4,1/3,1/2,2/3,3/4\}$ to minimize averaged final inference regret. Dashed and solid curves denote Standard and Uniform, respectively, while colors indicate the acquisition function. Curves show the mean inference regret over 20 random seeds, with shaded regions representing one standard deviation. Lower values indicate better performance. Because inference regret is evaluated at the current posterior mean maximizer, the trajectories need not be monotone.}
\label{fig:app_synthetic_tuned}
\end{figure*}

\begin{table*}[htp]
\setlength{\belowcaptionskip}{5pt}
\centering\scriptsize
\caption{\textbf{Final inference regret under calibrated exploration.} For each objective--acquisition pair, \(\alpha^*\) is selected from \(\{1/4,1/3,1/2,2/3,3/4\}\) to minimize mean final inference regret. Standard denotes standard BO, while Uniform denotes BO with uniform random exploration. Table entries are based on 20 independent runs. Gains measure the improvement of Uniform over Standard; bold values indicate gains whose pointwise 95\% bootstrap confidence intervals exclude zero. Lower inference regret is better.}
\label{tab:app_synthetic_endpoints}
\resizebox{\textwidth}{!}{%
\begin{tabular}{@{}llcllll@{}}
\toprule
Objective & Dim. & Acquisition & $\alpha^*$ & Standard (mean $\pm$ SD) & Uniform (mean $\pm$ SD) & Gain [95\% CI]\\
\midrule
Trid & 10 & GP-UCB & $1/4$ & 0.043 $\pm$ 0.021 & 0.013 $\pm$ 0.005 & \textbf{0.030 [0.021,0.039]}\\
Trid & 10 & LogEI & $1/4$ & 0.044 $\pm$ 0.024 & 0.013 $\pm$ 0.004 & \textbf{0.032 [0.022,0.043]}\\
Trid & 10 & GP-TS & $1/4$ & 0.019 $\pm$ 0.012 & 0.013 $\pm$ 0.004 & \textbf{0.006 [0.001,0.012]}\\
\addlinespace[1pt]
Rosenbrock & 20 & GP-UCB & $1/4$ & 0.168 $\pm$ 0.132 & 0.056 $\pm$ 0.010 & \textbf{0.112 [0.066,0.179]}\\
Rosenbrock & 20 & LogEI & $1/4$ & 0.162 $\pm$ 0.051 & 0.057 $\pm$ 0.011 & \textbf{0.105 [0.083,0.127]}\\
Rosenbrock & 20 & GP-TS & $1/4$ & 0.075 $\pm$ 0.017 & 0.061 $\pm$ 0.014 & \textbf{0.014 [0.003,0.025]}\\
\addlinespace[1pt]
Shifted Bent Cigar & 30 & GP-UCB & $1/4$ & 0.769 $\pm$ 0.370 & 0.104 $\pm$ 0.035 & \textbf{0.665 [0.516,0.832]}\\
Shifted Bent Cigar & 30 & LogEI & $1/4$ & 0.514 $\pm$ 0.197 & 0.105 $\pm$ 0.034 & \textbf{0.409 [0.326,0.498]}\\
Shifted Bent Cigar & 30 & GP-TS & $1/3$ & 0.127 $\pm$ 0.025 & 0.102 $\pm$ 0.033 & \textbf{0.025 [0.007,0.042]}\\
\addlinespace[1pt]
Sphere & 30 & GP-UCB & $1/4$ & 0.902 $\pm$ 0.479 & 0.099 $\pm$ 0.020 & \textbf{0.802 [0.613,1.014]}\\
Sphere & 30 & LogEI & $1/4$ & 0.485 $\pm$ 0.207 & 0.098 $\pm$ 0.020 & \textbf{0.387 [0.300,0.477]}\\
Sphere & 30 & GP-TS & $1/4$ & 0.129 $\pm$ 0.036 & 0.098 $\pm$ 0.023 & \textbf{0.031 [0.011,0.054]}\\
\addlinespace[1pt]
Griewank & 30 & GP-UCB & $1/4$ & 5.827 $\pm$ 1.922 & 0.449 $\pm$ 0.083 & \textbf{5.379 [4.579,6.196]}\\
Griewank & 30 & LogEI & $1/4$ & 2.001 $\pm$ 0.807 & 0.449 $\pm$ 0.083 & \textbf{1.552 [1.213,1.910]}\\
Griewank & 30 & GP-TS & $1/4$ & 0.501 $\pm$ 0.116 & 0.449 $\pm$ 0.088 & 0.051 [-0.006,0.109]\\
\addlinespace[1pt]
Sum of Different Powers & 30 & GP-UCB & $1/4$ & 0.054 $\pm$ 0.122 & 0.003 $\pm$ 0.004 & \textbf{0.051 [0.009,0.110]}\\
Sum of Different Powers & 30 & LogEI & $1/4$ & 0.024 $\pm$ 0.019 & 0.003 $\pm$ 0.004 & \textbf{0.021 [0.013,0.030]}\\
Sum of Different Powers & 30 & GP-TS & $3/4$ & 0.002 $\pm$ 0.002 & 0.002 $\pm$ 0.004 & $-0.0001$ [$-0.0020$,0.0016]\\
\bottomrule
\end{tabular}}
\end{table*}
\FloatBarrier

\begin{figure*}[htp]
\centering
\includegraphics[width=\textwidth]{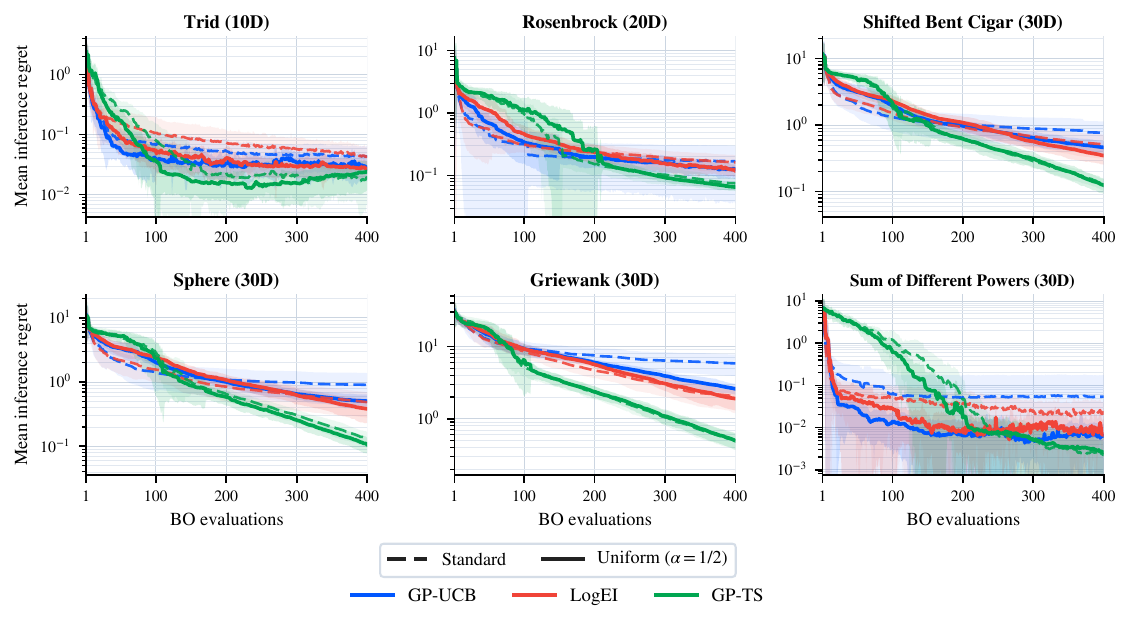}
\caption{\textbf{Benchmark function inference regret trajectories at a fixed exploration exponent.}
Standard BO (Standard) and BO with uniform random exploration (Uniform) are compared using GP-UCB, LogEI, and GP-TS across six benchmark functions, with the exploration exponent fixed at $\alpha=1/2$. Dashed and solid curves denote Standard and Uniform, respectively, while colors indicate the acquisition function. Curves show the mean inference regret over 20 random seeds, with shaded regions representing one standard deviation. Lower values indicate better performance.} 


\label{fig:app_synthetic_uniform_alpha_half}
\end{figure*}

\begin{table*}[!t]
\setlength{\belowcaptionskip}{5pt}
\centering\scriptsize
\caption{\textbf{Final inference regret at the fixed exploration exponent $\alpha=1/2$.} Standard denotes standard BO, while Uniform denotes BO with uniform random exploration. Table entries are based on 20 independent runs. Gains measure the improvement of Uniform over Standard; bold values indicate gains whose pointwise 95\% bootstrap confidence intervals exclude zero. Lower inference regret is better.}
\label{tab:app_synthetic_alpha_half}
\resizebox{\textwidth}{!}{%
\begin{tabular}{@{}llclll@{}}
\toprule
Objective & Dim. & Acquisition & Standard (mean $\pm$ SD) & Uniform (mean $\pm$ SD) & Gain [95\% CI]\\
\midrule
Trid & 10 & GP-UCB & 0.043 $\pm$ 0.021 & 0.028 $\pm$ 0.012 & \textbf{0.015 [0.005,0.027]}\\
Trid & 10 & LogEI & 0.044 $\pm$ 0.024 & 0.027 $\pm$ 0.009 & \textbf{0.018 [0.007,0.030]}\\
Trid & 10 & GP-TS & 0.019 $\pm$ 0.012 & 0.024 $\pm$ 0.012 & $-0.006$ [$-0.011$,0.000]\\
\addlinespace[1pt]
Rosenbrock & 20 & GP-UCB & 0.168 $\pm$ 0.132 & 0.126 $\pm$ 0.040 & 0.042 [$-0.010$,0.111]\\
Rosenbrock & 20 & LogEI & 0.162 $\pm$ 0.051 & 0.120 $\pm$ 0.027 & \textbf{0.041 [0.020,0.064]}\\
Rosenbrock & 20 & GP-TS & 0.075 $\pm$ 0.017 & 0.065 $\pm$ 0.009 & \textbf{0.010 [0.002,0.018]}\\
\addlinespace[1pt]
Shifted Bent Cigar & 30 & GP-UCB & 0.769 $\pm$ 0.370 & 0.462 $\pm$ 0.106 & \textbf{0.307 [0.145,0.492]}\\
Shifted Bent Cigar & 30 & LogEI & 0.514 $\pm$ 0.197 & 0.348 $\pm$ 0.094 & \textbf{0.166 [0.066,0.266]}\\
Shifted Bent Cigar & 30 & GP-TS & 0.127 $\pm$ 0.025 & 0.125 $\pm$ 0.032 & 0.003 [$-0.015$,0.020]\\
\addlinespace[1pt]
Sphere & 30 & GP-UCB & 0.902 $\pm$ 0.479 & 0.504 $\pm$ 0.136 & \textbf{0.398 [0.192,0.633]}\\
Sphere & 30 & LogEI & 0.485 $\pm$ 0.207 & 0.381 $\pm$ 0.152 & 0.104 [$-0.001$,0.212]\\
Sphere & 30 & GP-TS & 0.129 $\pm$ 0.036 & 0.106 $\pm$ 0.028 & \textbf{0.023 [0.007,0.041]}\\
\addlinespace[1pt]
Griewank & 30 & GP-UCB & 5.827 $\pm$ 1.922 & 2.590 $\pm$ 0.442 & \textbf{3.237 [2.415,4.126]}\\
Griewank & 30 & LogEI & 2.001 $\pm$ 0.807 & 1.896 $\pm$ 0.598 & 0.105 [$-0.358$,0.570]\\
Griewank & 30 & GP-TS & 0.501 $\pm$ 0.116 & 0.495 $\pm$ 0.125 & 0.006 [$-0.079$,0.086]\\
\addlinespace[1pt]
Sum of Different Powers & 30 & GP-UCB & 0.054 $\pm$ 0.122 & 0.006 $\pm$ 0.006 & \textbf{0.047 [0.005,0.108]}\\
Sum of Different Powers & 30 & LogEI & 0.024 $\pm$ 0.019 & 0.007 $\pm$ 0.009 & \textbf{0.018 [0.008,0.028]}\\
Sum of Different Powers & 30 & GP-TS & 0.002 $\pm$ 0.002 & 0.003 $\pm$ 0.003 & $-0.0004$ [$-0.0023$,0.0014]\\
\bottomrule
\end{tabular}}
\end{table*}
\FloatBarrier


\subsection{HPO acquisition coverage}
\label{app:exp_hpo}
\paragraph{Setup.} YAHPO Gym \citep{pfisterer2022yahpo} provides fixed surrogate benchmarks for reproducible hyperparameter optimization problems. We minimize validation log loss on the 14-dimensional XGBoost task \texttt{car} and the 28-dimensional \texttt{iaml\_super} task \texttt{sylvine}. The former tunes XGBoost hyperparameters, whereas the latter jointly selects a learning algorithm and its hyperparameters. The 14D and 28D experiments use total budgets of 120 and 200 evaluations, respectively, including 10 initial points, and each setting is evaluated over 30 independent runs. For both problems, we use the exploration schedule \(p_t=\min\{1,\log(t+1)/(t+1)^\alpha\}\) with \(\alpha=1/2\), and consider GP-UCB, LogEI, and GP-TS as the acquisition functions. For the exploration strategy, we consider both uniform random sampling and approximate greedy packing.


\paragraph{Results.} Figure~\ref{fig:app_hpo} reports improvements in final validation log loss over standard BO across the three acquisition functions, with positive values indicating better performance. On \texttt{car}, incorporating either uniform random exploration or greedy packing strategy yields statistically significant improvements for GP-UCB and LogEI. For GP-TS, uniform random exploration performs comparably to standard BO, while Greedy Packing slightly underperforms standard BO. On \texttt{sylvine}, uniform random exploration consistently improves GP-TS and LogEI, with no statistically significant difference for GP-UCB, whereas greedy packing yields statistically significant improvements for all three acquisition functions. Overall, these results demonstrate that incorporating an exploration strategy can improve the hyperparameter optimization performance of BO across a range of acquisition functions.

\begin{figure}[htb]
\centering
\includegraphics[width=\textwidth]{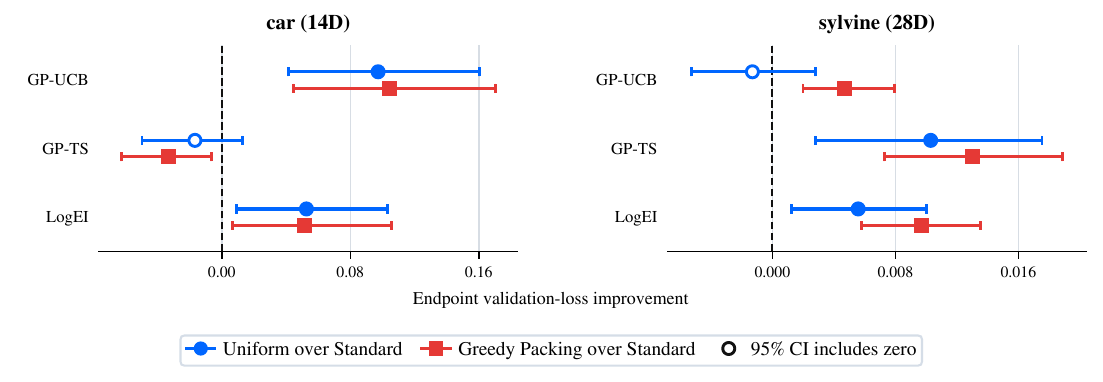}
\caption{\textbf{Final HPO comparisons across acquisition functions.}
Differences in final best-observed validation log loss between exploration-aided BO and standard BO (Standard) over 30 independent runs. Uniform denotes BO with uniform random exploration, while Greedy Packing denotes BO with approximate greedy packing exploration. Both exploration strategies use $\alpha=1/2$. Positive values indicate lower validation loss than Standard. Markers show mean improvements, and horizontal lines show pointwise 95\% bootstrap confidence intervals. Filled markers indicate intervals excluding zero.}
\label{fig:app_hpo}
\end{figure}

\FloatBarrier
\subsection{Lunar Lander controller tuning}
\label{app:exp_lunar}
\paragraph{Setup.} We consider a 12-parameter Lunar Lander controller-tuning problem \citep{eriksson2019scalable} over \([0,2]^{12}\). We evaluate each method over 30 independent runs. Each run uses a total budget of 400 evaluations, including 24 initial points. For a given random seed, all methods share the same initial design and the same set of 50 terrain seeds, yielding paired comparisons under identical initial conditions and evaluation environments. Each objective evaluation is the controller's mean return over these 50 terrains, which remain fixed throughout the run. Higher mean return indicates better controller performance. We regard a controller as tuned when its mean return over the 50 terrains is at least 200; this threshold is marked in Figure~\ref{fig:exp_applications}(c). We evaluate uniform random exploration with GP-UCB, LogEI, and GP-TS using the decaying schedule
$
p_t=\min\{1,\log(t+1)/(t+1)^{1/2}\}
$

\paragraph{Illustration on controller evolution.} 
Figure~\ref{fig:app_lunar_storyboard} illustrates the recommended controllers at four stages of standard BO (Standard) and BO with uniform random exploration using \(\alpha=1/2\) (Uniform), both with the LogEI acquisition function and the same random seed. Each panel shows the final scene from one representative terrain among the 50 terrains shared by the two methods. From 90 to 399 evaluations, Standard's mean return remains near 120, whereas Uniform's mean return increases from 154 to 285. This illustrative example shows how adding uniform random exploration can enable continued improvement when standard BO makes little further progress. 


\begin{figure*}[!t]
\centering
\includegraphics[width=0.93\textwidth]{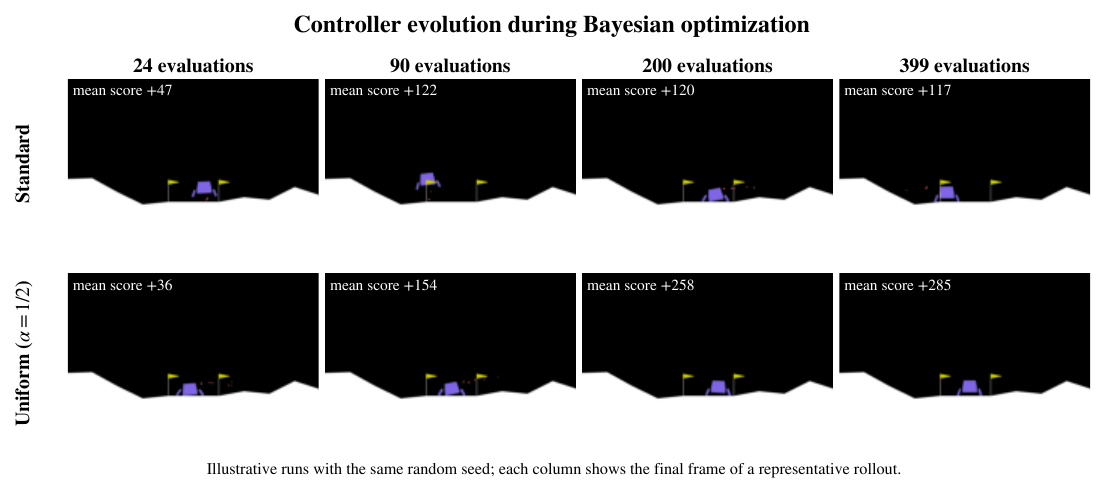}
\caption{\textbf{Evolution of a Lunar Lander controller during BO.} Recommended controllers from standard BO (Standard) and BO with uniform random exploration using \(\alpha=1/2\) (Uniform) at four stages during LogEI runs sharing the same random seed. Each panel shows the final frame on a representative terrain, and the reported score is the controller's mean return over the 50 terrains shared by the two methods.}
\label{fig:app_lunar_storyboard}
\end{figure*}

\FloatBarrier
\subsection{Computational comparison with MVR}
\label{app:exp_lowdim_mvr}
\paragraph{Setup.} We compare the computational cost of selecting exploration queries on two-dimensional shifted Ackley and shifted, rotated Rastrigin functions. All methods use GP-UCB with an exploration coefficient of two, two scrambled-Sobol initial observations, and 400 BO evaluations. We compare uniform random exploration with MVR-based exploration over 15 random seeds using the schedule \(p_t=\min\{1,\log(t+1)/(t+1)^{1/2}\}\). For each seed, the two methods start from the same initial observations, and a shared Bernoulli draw with probability \(p_t\) determines whether exploration occurs at each round. Thus, both methods explore at the same rounds but use different exploration rules: Uniform strategy samples a point uniformly at random from the search space, whereas MVR strategy selects the maximizer of the current posterior variance. On non-exploration rounds, both methods use GP-UCB based on their respective GP posteriors. Hence, the MVR-based method combines GP-UCB with scheduled variance maximization rather than maximizing posterior variance at every round. Standard BO uses GP-UCB at every round. All other experimental settings follow those of the synthetic benchmark experiments described above. Runtime is measured as the cumulative per-round wall-clock time for query selection, objective evaluation, GP fitting, and recommendation.

\paragraph{Inference regret.} Table~\ref{tab:app_lowdim_mvr} and Figure~\ref{fig:app_lowdim_mvr_runtime} summarize the results. Standard denotes standard BO using GP-UCB at every round, whereas Uniform denotes BO with scheduled uniform random exploration. Uniform achieves the lowest mean final inference regret on both objectives. Relative to Standard, the mean reduction in final inference regret is \(0.036\,[0.023,0.050]\) on Ackley and \(0.144\,[-0.391,0.674]\) on Rastrigin, where brackets denote 95\% bootstrap confidence intervals. On Ackley, MVR exhibits substantially greater variability,  where three high-regret runs between \(1.08\) and \(2.58\) inflate its mean regret. Its median regret is \(0.046\), compared with \(0.072\) for Standard, and MVR outperforms Standard in 9 of the 15 runs. 

\paragraph{Runtime comparison.} With \(\alpha=1/2\), the schedule yields 164 exploration rounds in expectation over 400 evaluations. On these rounds, uniform random exploration requires only a single uniform draw, whereas MVR performs multi-start posterior variance optimization from 20 starting points. Avoiding this additional optimization reduces mean runtime relative to MVR-based exploration by 44$\%$ on Ackley and 41$\%$ on Rastrigin. Relative to standard BO, which performs acquisition function maximization at every round, BO with uniform random exploration reduces mean runtime by 40$\%$ and 37$\%$, corresponding to average savings of 70 seconds and 54 seconds, with respective 95$\%$ confidence intervals of \([56,83]\) and \([44,63]\). Thus, BO with uniform random exploration is computationally more efficient than both standard BO and MVR-based exploration while also achieving the lowest mean final inference regret.

\begin{table}[htp]
\setlength{\belowcaptionskip}{5pt}
\centering\small
\caption{\textbf{Final inference regret and runtime on the two-dimensional benchmarks.} Entries report the mean (the standard deviation) over 15 random seeds. Runtime ratios are computed relative to standard BO on the same objective. Lower values indicate better performance for both metrics.}
\label{tab:app_lowdim_mvr}
\begin{tabular}{@{}llrrr@{}}
\toprule
Objective & Method & Final inference regret & Runtime (s) & Runtime / Standard\\
\midrule
Ackley 2D & Standard & 0.066 (0.025) & 174 (24) & 1.00\\
           & Uniform & \textbf{0.030 (0.020)} & \textbf{104 (14)} & \textbf{0.60}\\
           & MVR & 0.462 (0.902) & 186 (12) & 1.07\\
\addlinespace[1pt]
Rastrigin 2D & Standard & 0.684 (0.693) & 144 (13) & 1.00\\
              & Uniform & \textbf{0.540 (0.640)} & \textbf{90 (11)} & \textbf{0.63}\\
              & MVR & 0.755 (0.601) & 153 (6) & 1.06\\
\bottomrule
\end{tabular}
\end{table}

\begin{figure}[htp]
\centering
\includegraphics[width=\textwidth]{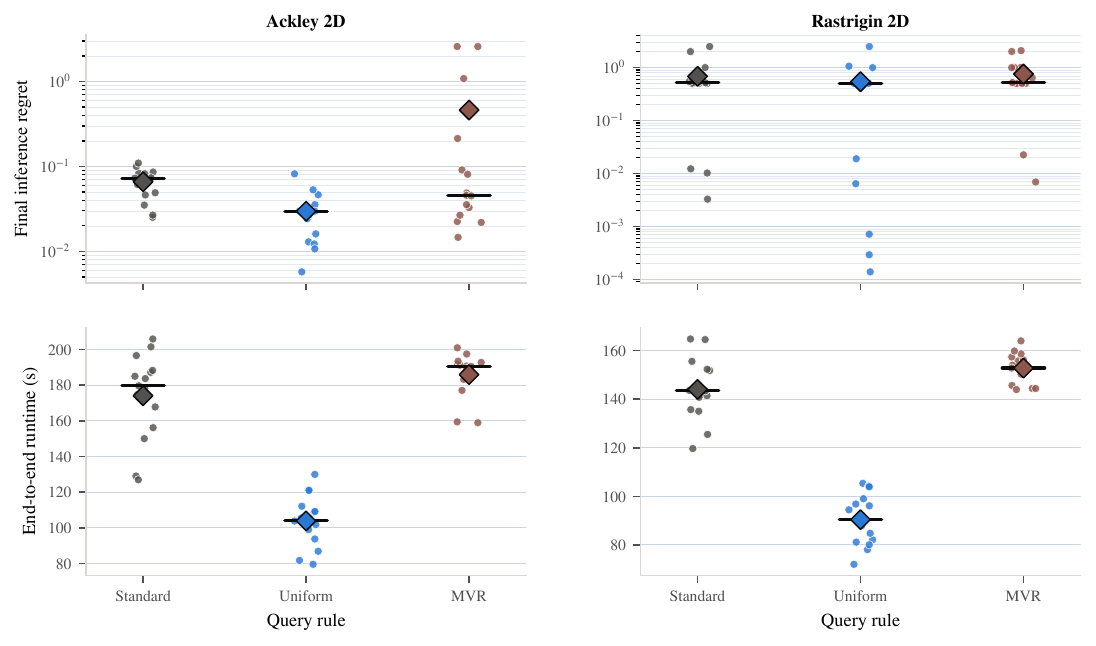}
\caption{\textbf{Final inference regret and runtime.} Final inference regret (top; logarithmic scale) and measured end-to-end optimizer runtime (bottom) over 15 random seeds. Points represent individual runs, horizontal segments indicate medians, and diamonds indicate means. }
\label{fig:app_lowdim_mvr_runtime}
\end{figure}
\FloatBarrier


\end{document}